\newtheoremstyle{theorem}{2\topsep}{\topsep}{\slshape}{}{\bfseries}{.}{5pt plus 1pt minus 1pt}{}
\theoremstyle{theorem}
\newtheorem{theorem}{Theorem}
\newtheorem{definition}[theorem]{Definition}
\newtheorem{result}[theorem]{Result}
\newtheorem{proposition}[theorem]{Proposition}
\newtheorem{remark}[theorem]{Remark}
\newtheorem*{assumption*}{Assumption}
\newtheorem{Algorithm}{Algorithm}
\renewcommand{\vec}{\ifmmode\expandafter\boldsymbol\fi} 
\newcommand{\mPhi}{\ifmmode\m\Phi\else Phi\fi} 
\newcommand{\mPsi}{\ifmmode\m\Psi\else Psi\fi} 
\newcolumntype{C}[1]{>{\centering\arraybackslash}p{#1}}
\let\textcite\cite
\let\cite\citep 
\begin{document}

\title{Sparse Parallel Training of Hierarchical Dirichlet Process Topic Models}
\author{Alexander Terenin \\
  Imperial College London \And
  M{\aa}ns Magnusson \\
  Uppsala University\\and Aalto University \And
  Leif Jonsson \\
  Ericsson AB \\ and Link\"{o}ping University}
\date{}

\maketitle

\begin{abstract}
To scale non-parametric extensions of probabilistic topic models such as Latent Dirichlet allocation to larger data sets, practitioners rely increasingly on parallel and distributed systems.
In this work, we study data-parallel training for the hierarchical Dirichlet process (HDP) topic model.
Based upon a representation of certain conditional distributions within an HDP, we propose a doubly sparse data-parallel sampler for the HDP topic model.
This sampler utilizes all available sources of sparsity found in natural language---an important way to make computation efficient.
We benchmark our method on a well-known corpus (PubMed) with 8m documents and 768m tokens, using a single multi-core machine in under four days.
\end{abstract}

\section{Introduction}

Topic models are a widely-used class of methods that allow practitioners to identify latent semantic themes in large bodies of text in an unsupervised manner.
They are particularly attractive in areas such as history \cite{yang11,wang12}, sociology \cite{dimaggio13}, and political science \cite{roberts14}, where a desire for careful control of structure and prior information incorporated into the model motivates one to adopt a Bayesian approach to learning.
In these areas, large corpora such as newspaper archives are becoming increasingly available \cite{ehrmann20}, and models such as latent Dirichlet allocation (LDA) \cite{blei03} and its nonparametric extensions \cite{teh06a, teh06b, hu12, paisley15} are widely used by practitioners.
Moreover, these models are emerging as a component of data-efficient language models \cite{guo19}.
Training topic models efficiently entails two requirements.
\1 Expose sufficient parallelism that can be taken advantage of by the hardware.
\2 Utilize sparsity found in natural language to control memory requirements and computational complexity.
\0

\begin{table*}[t!]
\begin{center}
\captionfont
\begin{tabular}{c l c c l}
\hline
Symbol & Description && Symbol & Description
\\
\cline{1-2} \cline{4-5}
$V$ & Vocabulary size
&&
$\v\Psi: 1 \x \infty$ & Global distribution over topics
\\
$D$ & Total number of documents
&&
$\m{\Theta} : D \x \infty$ & Document-topic probabilities
\\
$N$ & Total number of tokens
&&
$\v{\theta}_d : 1 \x \infty$ & Topic probabilities for document $d$
\\
$v(i)$ & Word type for token $i$
&&
$\m{m} : D \x \infty$ & Document-topic sufficient statistic
\\
$d(i)$ & Document for token $i$
&&
$\m{\Phi} : \infty \x V$ & Topic-word probabilities
\\
$w_{i,d}$ & Token $i$ in document $d$
&&
$\v{\phi}_k : 1 \x V$ & Word probabilities for topic $k$
\\
$b_{i,d}$ & Global topic draw indicator for $w_{i,d}$
&&
$\m{n} : \infty \x V$ & Topic-word sufficient statistic
\\
$z_{i,d}$ & Topic indicator for token $i$ in $d$
&&
$\v{l}:1\x\infty$ & Global topic latent sufficient \rlap{statistic}\hspace*{6ex}
\\
$K^*$ & Index for implicitly-represented \rlap{topics}\hspace*{4ex}
&&
$\alpha,\v\beta,\gamma$ & Prior concentration for $\v\theta_d$, $\v\phi_k$, $\v\Psi$
\\
\hline
\end{tabular}
\end{center}
\caption{Notation for the HDP topic model. Sufficient statistics are conditional on the algorithm's current iteration. Bold symbols refer to matrices, bold italics refer to vectors, possibly countably infinite.}
\end{table*}

In this work, we focus on the \emph{hierarchical Dirichlet process} (HDP) topic model of \textcite{teh06a}, which we review in Section \ref{sec:alg}.
This model is a simple non-trivial extension of LDA to the nonparametric setting. 
This parallel implementation provides a blueprint for designing massively parallel training algorithms in more complicated settings, such as nonparametric dynamic topic models \cite{ahmed10} and tree-based extensions \cite{hu12}.

Parallel approaches to training HDPs have been previously introduced by a number of authors, including \textcite{newman09}, \textcite{wang11}, \textcite{williamson13}, \textcite{chang14} and \textcite{ge15}.
These techniques suit various settings: some are designed to explicitly incorporate sparsity present in natural language and other discrete spaces, while others are intended for HDP-based continuous mixture models.
\textcite{gal14} have pointed out that some methods can suffer from load-balancing issues, which limit their parallelism and scalability.
The largest benchmark of parallel HDP training performed to our awareness is by \textcite{chang14} on the 100m-token \textsc{NYTimes} corpora.
Throughout this work, we focus on Markov chain Monte Carlo (MCMC) methods---empirically, their scalability is comparable to variational methods \cite{magnusson18,hoffman19}, and, subject to convergence, they yield the correct posterior.

Our contributions are as follows.
We propose an augmented representation of the HDP for which the topic indicators can be sampled in parallel over documents.
We prove that, under this representation, the global topic distribution $\v\Psi$ is conditionally conjugate given an auxiliary parameter $\v{l}$.
We develop fast sampling schemes for $\v\Psi$ and $\v{l}$, and propose a training algorithm with a per-iteration complexity that depends on the minima of two sparsity terms---it takes advantage of both document-topic and topic-word sparsity simultaneously.

\section{Partially collapsed Gibbs sampling for hierarchical Dirichlet processes}
\label{sec:alg}

The hierarchical Dirichlet process topic model \cite{teh06a} begins with a global distribution $\v\Psi$ over topics.
Documents are assumed exchangeable---for each document $d$, the associated topic distribution $\v\theta_d$ follows a Dirichlet process centered at $\v\Psi$.
Each topic is associated with a distribution of tokens $\v\phi_k$.
Within each document, tokens are assumed exchangeable (bag of words) and assigned to topic indicators $z_{i,d}$.
For given data, we observe the tokens $w_{i,d}$.

We thus arrive at the GEM representation of a HDP, given by equation (19) of \textcite{teh06a} as
\< \label{eqn:gem-representation-1}
\v\Psi &\~[GEM](\gamma)
\\ \label{eqn:gem-representation-2}
\v\theta_d \given \v\Psi &\~[DP](\alpha, \v\Psi)
\\ \label{eqn:gem-representation-3}
\v\phi_k &\~[Dir](\v\beta)
\\ \label{eqn:gem-representation-4}
z_{i,d} \given \v\theta_d &\~[Discrete](\v\theta_d)
\\ \label{eqn:gem-representation-5}
w_{i,d} \given z_{i,d}, \m\Phi &\~[Discrete](\v \phi_{z_{i,d}})
\>
where $\alpha, \v\beta, \gamma$ are prior hyperparameters.

\subsection{Intuition and augmented representation}

At a high level, our strategy for constructing a scalable sampler is as follows.
Conditional on $\v\Psi$, the likelihood in equations \eqref{eqn:gem-representation-1}--\eqref{eqn:gem-representation-5} is the same as that of LDA.
Using this observation, the Gibbs step for $\v{z}$, which is the largest component of the model, can be handled efficiently by leveraging insights on sparse parallel sampling from the well-studied LDA literature \cite{yao09, li14, magnusson18, terenin19b}.
For this strategy to succeed, we need to ensure that all Gibbs steps involved in the HDP under this representation are analytically tractable and can be computed efficiently.
For this, the representation needs to be modified.

To begin, we integrate each $\v\theta_d$ out of the model, which by conjugacy \cite{blackwell73} yields a P\'{o}lya sequence for each $\v{z}_d$.
By definition, given in Appendix \ref{apdx:sufficiency}, this sequence is a mixture distribution with respect to a set of Bernoulli random variables $\v{b}_d$, each representing whether $z_{i,d}$ was drawn from $\v\Psi$ or from a repeated draw in the P\'{o}lya urn.
Thus, the HDP can be written
\<
\v\Psi &\~[GEM](\gamma)
\\
b_{i,d} &\~[Ber]\del[1]{{\textstyle\frac{\alpha}{i-1+\alpha}}}
\\
\v\phi_k &\~[Dir](\v\beta)
\\
\v{z}_{d} \given \v{b}_d, \v\Psi &\~[PS](\v\Psi, \v{b}_d)
\\
w_{i,d} \given z_{i,d} &\~[Discrete](\v \phi_{z_{i,d}})
\>
where $\f{PS}(\v\Psi,\v{b}_d)$ is a P\'{o}lya sequence, defined in Appendix \ref{apdx:sufficiency}.
This representation defines a posterior distribution over $\v{z},\m\Phi,\v\Psi,\v{b}$ for the HDP.
To derive a Gibbs sampler, we calculate its full conditionals.

\subsection{Full conditionals for $\vec{z}$, $\mPhi$, and $\vec{b}$}

The full conditionals $\v{z}\given\m\Phi,\v\Psi$ and $\m\Phi\given\v{z},\v\Psi$, with $\v{b}$ marginalized out, are essentially those in partially collapsed LDA \cite{magnusson18, terenin19b}. 
They are
\<
\P(z_{i,d} = k &\given \v{z}_{-i,d},\m\Phi,\v\Psi)
\\
&\propto \phi_{k,v(i)}\sbr{\alpha \Psi_k + m^{-i}_{d,k}}
\>
where $v(i)$ is the word type for word token $i$, and
\[
\v\phi_k \given \v{z} \~[Dir](\v\beta + \v{n}_k)
\]
where $m^{-i}_{d,k}$ denotes the document-topic sufficient statistic with index $i$ removed, and $\v{n}_k$ is the topic-word sufficient statistic.
Note the number of possible topics and full conditionals $\v\phi_k\given\v{z}$ here is countably infinite.
The full conditional for each~$b_{i,d}$~is
\<
\label{eqn:bernoullis}
\P(b_{i,d} = 1 &\given \v{z}_d, \v\Psi, \v{b}_{-i,d}) 
\\
&= \frac{\alpha \Psi_{z_{i,d}}}{\alpha \Psi_{z_{i,d}} + \sum_{j=1}^i \1_{z_{j,d}}(z_{i,d})}
.
\>
The derivation, based on a direct application of Bayes' Rule with respect to the probability mass function of the P\'{o}lya sequence, is in Appendix \ref{apdx:sufficiency}.

\subsection{The full conditional for $\mPsi$}
\label{sec:psi}

To derive the full conditional for $\v\Psi$, we examine the prior and likelihood components of the model.
It is shown in Appendix \ref{apdx:sufficiency} that the likelihood term $\v{z}_d\given\v{b}_d,\v\Psi$ may be written
\<
p&(\v{z}_d \given \v{b}_d, \v\Psi) 
\\
&= \underbracket[0.1ex]{\prod_{\substack{i = 1 \\ b_{i,d} \neq 1}}^{N_d} \sum_{j=1}^{i-1} \frac{1}{i-1} \1_{z_{j,d}}(z_{i,d})}_{\smash{\t{doesn't enter posterior}}}\prod_{\substack{i = 1 \\ b_{i,d} = 1}}^{N_d} \prod_{k=1}^\infty \Psi_k^{\1_k(z_{i,d})}
.
\nonumber
\>
The first term is a multiplicative constant independent of $\v\Psi$ and vanishes via normalization.
Thus, the full conditional $\v\Psi\given\v{z},\v{b}$ depends on $\v{z}$ and $\v{b}$ only through the sufficient statistic $\v{l}$ defined by
\<
l_k = \sum_{d=1}^D \sum_{\substack{i=1\\\smash{b_{i,d} = 1}}}^{N_d} \1_{z_{i,d} = k}
\>
and so we may suppose without loss of generality that the likelihood term is categorical.
Under these conditions, we prove the full conditional for $\v\Psi$ admits a stick-breaking representation.

\begin{proposition}
\label{prop:gem}
Without loss of generality, suppose
\<
\v\Psi &\~[GEM](\gamma)
&
\v{x}\given\v\Psi &\~[Discrete](\v\Psi)
.
\>
Then $\v\Psi\given\v{x}$ is given by
\?
\label{eqn:post-gem-1}
\!\!\!\!\Psi_k=\varsigma_k \prod_{i=1}^{k-1} (1 - \varsigma_i)
\quad
\varsigma_k \~[Beta](a^{(\v\Psi)}_k\!,b^{(\v\Psi)}_k)
\!\!\!
\\
\label{eqn:post-gem-2}
a^{(\v\Psi)}_k = 1 + l_k
\,\,\quad\,\,
b^{(\v\Psi)}_k = \gamma + \sum_{i=k+1}^{\smash{\infty}} l_i
\?
where $\v{l}$ are the empirical counts of $\v{x}$.
\end{proposition}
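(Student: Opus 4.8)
The plan is to work directly with the stick-breaking variables underlying the GEM prior and to exploit Beta--categorical conjugacy componentwise. Recall that $\v\Psi\sim\mathrm{GEM}(\gamma)$ means $\Psi_k=\beta_k\prod_{i<k}(1-\beta_i)$ with $\beta_k\sim\mathrm{Beta}(1,\gamma)$ independent across $k$. First I would substitute this representation into the categorical likelihood $p(\v{x}\given\v\Psi)=\prod_k\Psi_k^{l_k}$, where $l_k$ denotes the number of observations in $\v{x}$ equal to $k$, and then collect the exponents attached to each individual stick. This gives
\[
\prod_k \Psi_k^{l_k} = \prod_k \Bigl(\beta_k \prod_{i<k}(1-\beta_i)\Bigr)^{l_k} = \prod_m \beta_m^{l_m}(1-\beta_m)^{\sum_{k>m} l_k}.
\]

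The key step is the exponent bookkeeping in the last equality: the factor $\beta_m$ arises only from the $k=m$ term and is therefore raised to the power $l_m$, whereas $(1-\beta_m)$ appears once inside $\prod_{i<k}(1-\beta_i)$ for every index $k>m$, contributing total exponent $\sum_{k>m} l_k$. This rearrangement is what makes the argument work, because it shows the likelihood factorizes into a product of terms each depending on a \emph{single} stick-breaking variable. Since the GEM prior is itself a product of independent factors with density $p(\beta_m)\propto(1-\beta_m)^{\gamma-1}$, independence is preserved under the posterior, and I obtain
\[
p(\beta_m \given \v{x}) \propto \beta_m^{l_m}(1-\beta_m)^{\gamma-1+\sum_{k>m} l_k},
\]
which I recognize as $\mathrm{Beta}\bigl(1+l_m,\;\gamma+\sum_{k>m} l_k\bigr)$. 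Writing $\varsigma_m$ for the posterior stick-breaking variable $\beta_m\given\v{x}$ then reproduces \eqref{eqn:post-gem-1}--\eqref{eqn:post-gem-2} with $a^{(\v\Psi)}_k=1+l_k$ and $b^{(\v\Psi)}_k=\gamma+\sum_{i>k} l_i$.

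The main obstacle is not the algebra but the measure-theoretic legitimacy of manipulating the countably infinite product and the tail sums $\sum_{k>m} l_k$, together with the claim that posterior independence across sticks genuinely holds. I would therefore make explicit that $\v{x}$ is a finite collection of observations, so only finitely many counts $l_k$ are nonzero, every tail sum is finite, and all but finitely many components revert to the prior $\mathrm{Beta}(1,\gamma)$. This finiteness justifies interchanging the product and the sum in the exponent computation and guarantees that the posterior sticks still sum to one almost surely, so that the resulting object is a bona fide $\mathrm{GEM}$-type law rather than a merely formal expression. Once this is stated, the componentwise conjugacy argument above goes through rigorously and yields the claimed stick-breaking posterior.
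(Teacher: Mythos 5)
Your proof is correct, and it takes a genuinely different route from the paper's. You work entirely in stick coordinates: you substitute $\Psi_k=\beta_k\prod_{i<k}(1-\beta_i)$ into the categorical likelihood, regroup exponents so the likelihood factorizes as $\prod_m \beta_m^{l_m}(1-\beta_m)^{\sum_{k>m}l_k}$, and apply componentwise Beta conjugacy; the exponent bookkeeping is right, and your observation that $\v{x}$ is finite (so only finitely many sticks enter the likelihood, tail sums are finite, and all remaining sticks revert to the prior) is exactly what makes the infinite product and the posterior independence legitimate. The paper instead never touches the sticks directly in its conjugacy step: it embeds the finite-dimensional marginals of the GEM inside the \emph{generalized Dirichlet} family of Connor and Mosimann, proves conjugacy there via the density of $\v\Psi$ on the simplex (Result \ref{res:gd}), and then lifts to the infinite-dimensional posterior by the Disintegration Theorem, using that conditioning and marginalization commute and that a law on $\c{M}_1(\N)$ is determined by its finite-dimensional projections. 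Your argument is more elementary and makes the posterior independence of the sticks transparent, at the cost of leaving the existence and uniqueness of the infinite-dimensional conditional law somewhat implicit (your finiteness remark essentially reduces this to finite-dimensional Bayes plus independence, which suffices). The paper's detour through the generalized Dirichlet is not wasted effort, however: it is reused to show that the truncation $\varsigma_{K^*}=1$ in Algorithm \ref{alg:ppu-hdp} is the \emph{exact} posterior update for a finite GEM prior, a fact that does not fall out of the stick-by-stick computation as directly.
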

\begin{proof}
Appendix \ref{apdx:gem}.
\end{proof}

This expression is similar to the stick-breaking representation of a Dirichlet process $\f{DP}(\cdot,F)$---however, it has different weights and does not include random atoms drawn from $F$ as part of its definition---see Appendix \ref{apdx:gem} for more details.
Putting these ideas together, we define an infinite-dimensional parallel Gibbs sampler.

\begin{Algorithm} \label{alg:pc-hdp}
Repeat until convergence.
\1* Sample $\v{\phi}_k \~[Dir](\v{n}_k + \v{\beta})$ in parallel over topics for $k=1,..,\infty$.
\2* Sample $z_{i,d} \propto \phi_{k,v(i)}\,\alpha\,\Psi_k + \phi_{k,v(i)}\, m_{d,k}^{-i}$ in parallel over documents for $d=1,..,D$.
\3* Sample $b_{i,d}$ according to equation \eqref{eqn:bernoullis} in parallel over documents for $d=1,..,D$.
\4* Sample $\v\Psi$ according to equations \eqref{eqn:post-gem-1}--\eqref{eqn:post-gem-2}.
\0*
\end{Algorithm}

\noindent
Algorithm \ref{alg:pc-hdp} is completely parallel, but cannot be implemented as stated due to the infinite number of full conditionals for $\m\Phi$, as well as the infinite product used in sampling $\v\Psi$.
We now bypass these issues by introducing an approximate finite-dimensional sampling scheme.

\subsection{Finite-dimensional sampling of $\mPsi$ and $\mPhi$}

By way of assuming $\v\Psi\~[GEM](\gamma)$, an HDP assumes an infinite number of topics are present a priori, with the number of tokens per topic decreasing rapidly with the topic's index in a manner controlled by $\gamma$.
Thus, under the model, a topic with a sufficiently large index should contain no tokens with high probability.

We thus propose to approximate $\v\Psi$ by projecting its tail onto a single flag topic $K^*$, which stands for all topics not explicitly represented as part of the computation.
This can be done by by deterministically setting $\varsigma_{K^*} = 1$ in equation \eqref{eqn:post-gem-1}.
The resulting finite-dimensional $\v\Psi$ will be the correct posterior full conditional for the finite-dimensional generalized Dirichlet prior considered previously in Section \ref{sec:psi}.
Hence, this finite-dimensional truncation forms a Bayesian model in its own right, which suggests it should perform reasonably well.
From an asymptotic perspective, \textcite{ishwaran01} have shown that the approximation is almost surely convergent and, therefore, well-posed.

Once this is done, $\v\Psi$ becomes a finite vector of length $K^*$, and only $K^*$ rows of $\m\Phi$ need to be explicitly instantiated as part of the computation.
This instantiation allows the algorithm to be defined on a fixed finite state space, simplifying bookkeeping and implementation.

From a computational efficiency perspective, the resulting value $K^*$ takes the place of $K$ in partially collapsed LDA. However, it \emph{cannot} be interpreted as the number of topics in the sense of LDA.
Indeed, LDA implicitly assumes that $\v\Psi = \f{Unif}(1,..,K)$ deterministically---i.e., that every topic is assumed a priori to contain the same number of tokens. In contrast, the HDP model learns this distribution from the data by letting $\v\Psi \~[GEM](\gamma)$.

If we allow the state space to be resized when topic $K^*$ is sampled, then following \textcite{papaspiliopoulos08}, it is possible to develop truncation schemes which introduce no error.
Since this results in more complicated bookkeeping which reduces performance, we instead fix $K^*$ and defer such considerations to future work.
We recommend setting $K^*$ to be sufficiently large that it does not significantly affect the model's behavior, which can be checked by tracking the number of tokens assigned to the topic $K^*$.

\subsection{Sparse sampling of $\mPhi$ and $\vec{z}$}

To be efficient, a topic model needs to utilize the sparsity found in natural language as much as possible.
In our case, the two main sources of sparsity are as follows.
\1 \emph{Document-topic sparsity}: most documents will only contain a handful of topics.
\2 \emph{Topic-word sparsity}: most word types will not be present in most topics.
\0
We thus expect the document-topic sufficient statistic $\m{m}$ and topic-word sufficient statistic $\m{n}$ to contain many zeros.
We seek to use this to reduce sampling complexity.
Our starting point is the Poisson P\'{o}lya Urn sampler of \textcite{terenin19b}, which presents a Gibbs sampler for LDA with computational complexity that depends on the minima of two sparsity coefficients representing document-topic and topic-word sparsity---such algorithms are termed \emph{doubly sparse}.
The key idea is to approximate the Dirichlet full conditional for $\v\phi_k$ with a Poisson P\'{o}lya Urn (PPU) distribution defined by
\<
\!\!\!\phi_{k,v} &\!=\! \frac{\varphi_{k,v}}{\sum_{v=1}^V \varphi_{k,v}}
&
\varphi_{k,v} &\!\~\!\f{Pois}(\beta_{k,v}\!+\!n_{k,v})
\>
for $v=1,..,V$.
This distribution is discrete, so $\m\Phi$ becomes a sparse matrix.
The approximation is accurate even for small values of $n_{k,v}$, and \textcite{terenin19b} proves that the approximation error will vanish for large data sets in the sense of convergence in distribution.

If $\v\beta$ is uniform, we can further use sparsity to accelerate sampling $\varphi_{k,v}$.
Since a sum of Poisson random variables is Poisson, we can split $\varphi_{k,v} = \varphi_{k,v}^{\smash{(\beta)}} + \varphi_{k,v}^{\smash{(\m{n})}}$.
We then sample $\varphi_{k,v}^{\smash{(\beta)}}$ sparsely by introducing a Poisson process and sampling its points uniformly, and sample $\varphi_{k,v}^{\smash{(\m{n})}}$ sparsely by iterating over nonzero entries of $\m{n}$.

\begin{figure*}[b!]
\vspace*{-1ex}
\footnoterule
\footnotesize
\quad \footnotemark[1]See \url{http://mallet.cs.umass.edu} and \url{https://github.com/lejon/PartiallyCollapsedLDA}. 
AP and CGCBIB can be found therein. NeurIPS and PubMed can be found at \url{https://archive.ics.uci.edu/ml/datasets/bag+of+words}.
Full output of experiments can be found at \url{https://github.com/aterenin/Parallel-HDP-Experiments/}.
\label{ftn:code}
\end{figure*}

For $\v{z}$, the full conditional
\<
\P(z_{i,d} = k &\given \v{z}_{-i,d},\m\Phi,\v\Psi)
\\
&\propto \phi_{k,v(i)}\sbr{\alpha \Psi_k + m^{-i}_{d,k}}
\\
&\propto \underbracket[0.1ex]{\phi_{k,v(i)}\alpha \Psi_k \vphantom{\phi_{k,v(i)}m^{-i}_{d,k}} }_{(a)\vphantom{(b)}} + \underbracket[0.1ex]{\phi_{k,v(i)}m^{-i}_{d,k} \vphantom{\phi_{k,v(i)}\alpha \Psi_k} }_{(b)\vphantom{(a)}}
\>
is similar to to the one in partially collapsed LDA \cite{magnusson18}---the difference is the presence of $\Psi_k$.
As $\Psi_k$ only enters the expression through component $(a)$ and is identical for all $z_{i,d}$, it can be absorbed at each iteration directly into an alias table \cite{walker77, li14}.
Component $(b)$ can be computed efficiently by utilizing sparsity of $\m\Phi$ and $\m{m}$ and iterating over whichever has fewer non-zero entries.

\subsection{Direct sampling of $\vec{l}$}
\label{sec:bin}

Rather than sampling $\v{b}$, whose size will grow linearly with the number of documents, we introduce a scheme for sampling the sufficient statistic $\v{l}$ directly.
Observe that
\[
l_k = \sum_{d=1}^D \sum_{\substack{i=1\\b_{i,d} = 1}}^{N_d} \1_{z_{n,d}=k} = \sum_{d=1}^D \sum_{\substack{i=1\\z_{i,d} = k}}^{N_d} \1_{b_{i,d}=1}
\]
where the domain of summation and the value of the indicators have been switched.
By definition of $b_{i,d}$, we have
\[
\sum_{\substack{i=1\\z_{i,d} = k}}^{N_d} \1_{b_{i,d}=1} = \sum_{j=1}^{m_{d,k}} b_{j,d,k}
\]
where
\[
b_{j,d,k} \~[Ber] \del{\frac{\Psi_k \alpha}{\Psi_k \alpha + j - 1}}
.
\]
Summing this expression over documents, we obtain the expression
\<
\label{eqn:bin}
\!\!\!l_k &\!=\! \sum_{j=1}^{\mathclap{\max_d m_{d,k}}} c_{j,k}
&
c_{j,k} &\!\~\!\f{Bin}\del{\!D_{k,j}, \frac{\Psi_k \alpha}{\Psi_k \alpha\!+\!j\!-\!1}\!}\!\!
\>
where $D_{k,j}$ is the total number of documents with $m_{d,k} \geq j$.
Since $m_{d,k} = 0$ for all topics $k$ without any tokens assigned, we only need to sample $\v{l}$ for topics that have tokens assigned to them.
This idea can also be straightforwardly applied to other HDP samplers \cite{chang14,ge15}, by allowing one to derive alternative full conditionals in lieu of the \emph{Stirling distribution} \cite{antoniak74}.
The complexity of sampling $\v{l}$ directly is constant with respect to the number of documents, and depends instead on the maximum number of tokens per document.

To handle the bookkeeping necessary for computing $D_{k,j}$, we introduce a sparse matrix $\m{d}$ of size $K \x \max_d N_d$ whose entries $d_{k,p}$ are the number of documents for topic $k$ that have a total of $p$ topic indicators assigned to them.
We increment $\m{d}$ once $\v{z}_d$ been sampled by iterating over non-zero elements in $\v{m}_d$.
We then compute $D_{k,j}$ as the reverse cumulative sum of the rows of $\m{d}$.

\subsection{Poisson P\'{o}lya urn partially collapsed Gibbs sampling}

Putting all of these ideas together, we obtain the following algorithm.

\begin{Algorithm} \label{alg:ppu-hdp}
Repeat until convergence.
\1* Sample $\v{\phi}_k \~[PPU](\v{n}_k + \v{\beta})$ in parallel over topics for $k=1,..,K^*$.
\2* Sample $z_{i,d} \propto \phi_{k,v(i)}\,\alpha\,\Psi_k + \phi_{k,v(i)}\, m_{d,k}^{-i}$ in parallel over documents for $d=1,..,D$.
\3* Sample $l_k$ according to equation \eqref{eqn:bin} in parallel over topics for $k=1,..,K^*$.
\4* Sample $\v\Psi$ according to equations \eqref{eqn:post-gem-1}--\eqref{eqn:post-gem-2}, except with $\varsigma_{K^*} = 1$.
\0*
\end{Algorithm}

\noindent
Algorithm \ref{alg:ppu-hdp} is sparse, massively parallel, defined on a fixed finite state space, and contains no infinite computations in any of its steps.
The Gibbs step for $\m\Phi$ converges in distribution \cite{terenin19b} to the true Gibbs steps as $N \-> \infty$, and the Gibbs step for $\v\Psi$ converges almost surely \cite{ishwaran01} to the true Gibbs step as $K^*\->\infty$.

\subsection{Computational complexity}

We now examine the per-iteration computational complexity of Algorithm \ref{alg:ppu-hdp}.
To proceed, we fix $K^*$ and maximum document size $\max_d N_d$, and relate the vocabulary size $V$ with the number $N$ of total words as follows.

\begin{assumption*}[Heaps' Law]
The number of unique words in a corpus follows Heaps' law \cite{heaps78} $V = \xi N^\zeta$ with constants $\xi > 0$ and $\zeta < 1$.
\end{assumption*}

The per-iteration complexity of Algorithm \ref{alg:ppu-hdp} is equal to the sum of the per-iteration complexity of sampling its components.
The sampling complexities of $\v\Psi$ and $\v{l}$ are constant with respect to the number of tokens, and the sampling complexity of $\m\Phi$ has been shown by \textcite{magnusson18} to be negligible under the given assumptions.
Thus, it suffices to consider $\v{z}$.

At a given iteration, let $\smash{K_{d(i)}^{(\m{m})}}$ be the number of existing topics in document $d$ associated with word token $i$, and let $\smash{K^{(\m\Phi)}_{v(i)}}$ be the number of nonzero topics in the row of $\m\Phi$ corresponding to word token $i$.
It follows immediately from the argument given by \textcite{terenin19b} that the per-iteration complexity of sampling each topic indicator $z_i$ is
\[
\c{O}\sbr{\min\del{K^{(\m{m})}_{d(i)}, K^{(\m\Phi)}_{v(i)}}}
.
\]
Algorithm \ref{alg:ppu-hdp} is thus a doubly sparse algorithm.

\begin{table*}[b!]
\begin{center}
\captionfont
\begin{tabular}{l r r r c r r r}
\hline
Corpus & $V$ & $D$ & $N$ && Iterations & Threads & Runtime\\
\cline{1-4} \cline{6-8}
AP & 7 074 & 2 206 & 393 567 && 100 000 & 8 & 3.8 hours\\
CGCBIB & 6 079 & 5 940 & 570 370 && 100 000 & 12 & 2.7 hours\\
NeurIPS & 12 419 & 1 499 & 1 894 051 && 255 500 & 8 & 24 hours \\
PubMed & 89 987 & 8 199 999 & 768 434 972 && 25 000 & 20 & 82.4 hours\\
\hline
\end{tabular}
\end{center}
\caption{Corpora used in experiments, together with compute configuration.} \label{tbl:corpora-runtime}
\end{table*}

\section{Performance results} \label{sec:results}

\begin{figure*}
\begin{center}
\includegraphics{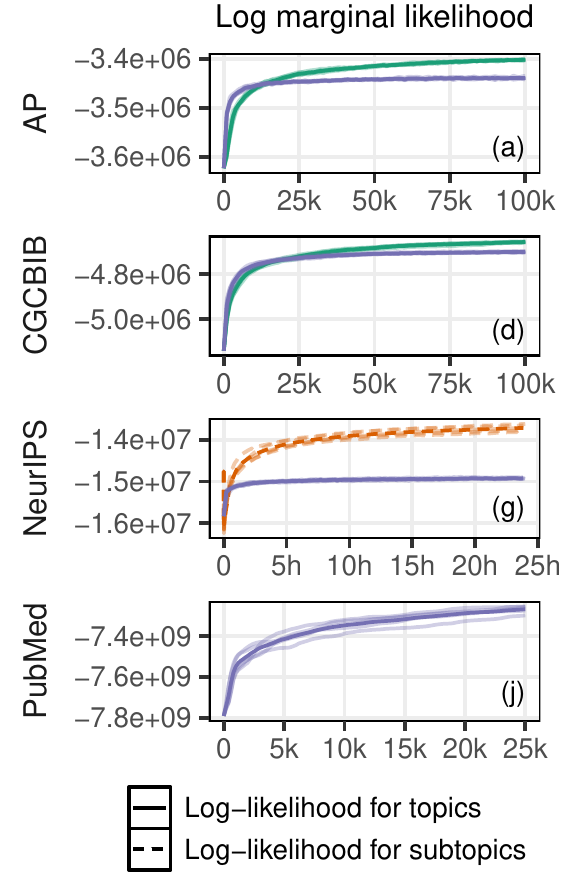}
\includegraphics{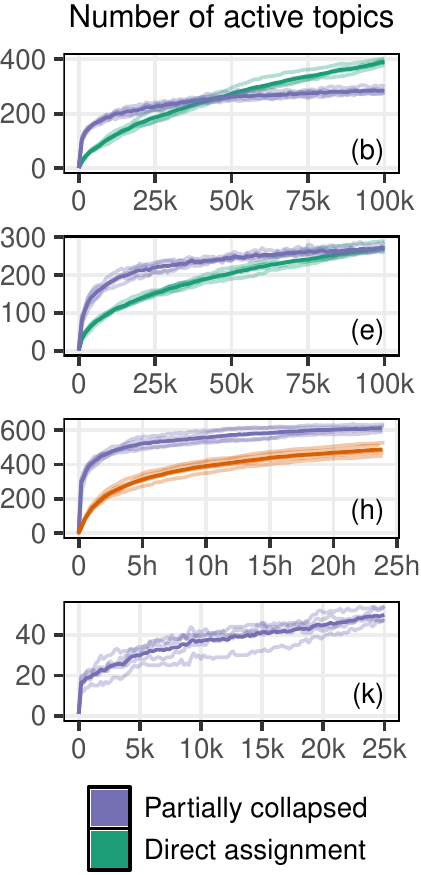}
\includegraphics{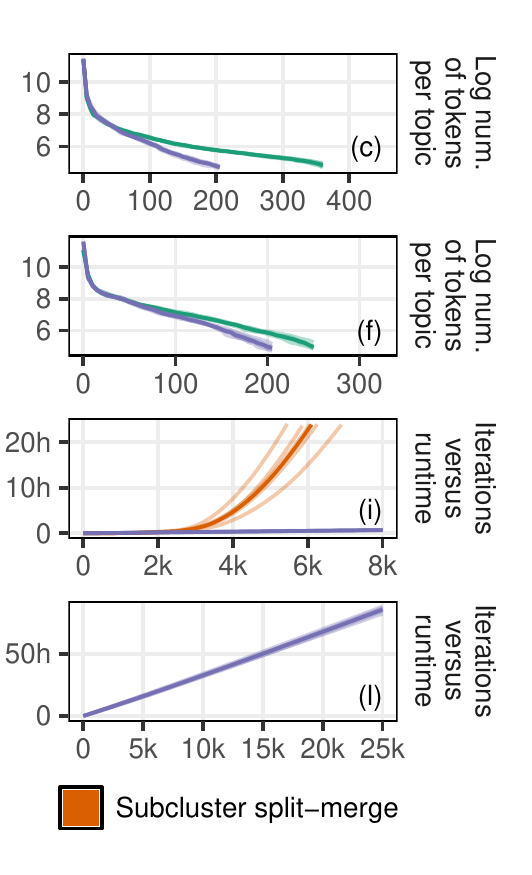}
\end{center}
\caption{Trace plots for log-likelihood, number of active topics, and additional metrics for CGCBIB, NeurIPS, and PubMed.
On the $x$ axis,  per-iteration scale is used for AP, CGCBIB and PubMed, and real-time scale is used for NeurIPS.
Algorithms used are partially collapsed HDP for all corpora, direct assignment HDP for AP and CGCBIB, and subcluster split-merge HDP for NeurIPS.
Individual traces are partially transparent, and their mean is opaque.} \label{fig:trace}
\end{figure*}

To study performance of the \emph{partially collapsed} sampler---Algorithm \ref{alg:ppu-hdp}---we implemented it in Java using the open-source \textsc{Mallet}\hyperref[ftn:code]{\footnotemark[1]} \cite{mccallum02} topic modeling framework.
We ran it on the \textsc{AP}, \textsc{CGCBIB}, \textsc{NeurIPS}, and \textsc{PubMed} corpora,\hyperref[ftn:code]{\footnotemark[1]} which are summarized in Table \ref{tbl:corpora-runtime}.
Prior hyperparameters controlling the degree of sparsity were set to $\alpha=0.1, \beta=0.01, \gamma=1$.
We set $K^* = 1000$ and observed no tokens ever allocated to the topic $K^*$.
Data were preprocessed with default Mallet \cite{mccallum02} stop-word removal, minimum document size of 10, and a rare word limit of 10.
Following \textcite{teh06a}, the algorithm was initialized with one topic.
All experiments were repeated five times to assess variability.
Total runtime for each experiment is given in Table \ref{tbl:corpora-runtime}.

To assess Algorithm \ref{alg:ppu-hdp} in a small-scale setting, we compare it to the widely-studied sparse fully collapsed \emph{direct assignment} sampler of \textcite{teh06a}, which is not parallel.
We ran 100 000 iterations of both methods on AP and CGCBIB.
We selected these corpora because they were among the larger corpora on which it was feasible to run our direct assignment reference implementation within one week.

Trace plots for the log marginal likelihood for $\v{z}$ given $\v\Psi$ and the number of active topics, i.e., those topics assigned at least one token, can be seen in Figure \ref{fig:trace}(a,d) and Figure \ref{fig:trace}(b,e), respectively.
The direct assignment algorithm converges slower, but achieves a slightly better local optimum in terms of marginal log-likelihood, compared to our method.
This fact indicates that the direct assignment method may stabilize around a different local optimum, and may represent a potential limitation of the partially collapsed sampler in settings where non-parallel methods are practical.

To better understand the distributional differences between the algorithms, we examined the number of tokens per topic, which can be seen in Figure \ref{fig:trace}(c,f).
The partially collapsed sampler is seen to assign more tokens to smaller topics, indicating that it stabilizes around a local optimum with slightly broader semantic themes.

To visualize the effect this has on the topics, we examined the most common words for each topic.
Since the algorithms generate too many topics to make full examination practical, we instead compute a quantile summary with five topics per quantile.
The quantile is computed by ranking all topics by the number of tokens, choosing the five closest topics to the $100\%$, $75\%$, $50\%$, $25\%$, and $5\%$ quantiles in the ranking, and computing their top words.
This approach gives a representative view of the algorithm's output for large, medium, and small topics.
Results may be seen in Appendix \ref{apdx:cgcbib} and Appendix \ref{apdx:ap}---we find the direct assignment and partially collapsed samplers to be mostly comparable, with substantial overlap in top words for common topics.

Next, we assess Algorithm \ref{alg:ppu-hdp} in a more demanding setting and compare against previous parallel state-of-the-art.
There are various scalable samplers available for the HDP. For a fair comparison, we restrict ourselves to those samplers designed for topic models and explicitly incorporate sparsity of natural language in their construction.
Among these, we selected the parallel \emph{subcluster split-merge} algorithm of \textcite{chang14} as our baseline because it was used in the largest-scale benchmark of the HDP topic model performed to date to our awareness, and shows comparable performance to other methods \cite{ge15}.
The subcluster split-merge algorithm is designed to converge with fewer iterations, but is more costly to run per iteration.
Thus, we used a fixed computational budget of 24 hours of wall-clock time for both algorithms.
Computation was performed on a system with a 4-core 8-thread CPU and 8GB RAM.

Results can be seen in Figure \ref{fig:trace}(g)---note that the subcluster split-merge algorithm is parametrized using \emph{sub-topic indicators} and \emph{sub-topic probabilities}, so its numerical log-likelihood values are not directly comparable to ours and should be \emph{interpreted purely to assess convergence}.
Algorithm \ref{alg:ppu-hdp} stabilizes much faster with respect to both the number of active topics in Figure \ref{fig:trace}(g), and marginal log-likelihood in Figure \ref{fig:trace}(h).
The subcluster split-merge algorithm adds new topics one-at-a-time, whereas our algorithm can create multiple new topics per iteration---we hypothesize this difference leads to faster convergence for Algorithm \ref{alg:ppu-hdp}.

In Figure \ref{fig:trace}(i), we observe that the amount of computing time per iteration increases substantially for the subcluster split-merge method as it adds more topics.
For Algorithm \ref{alg:ppu-hdp}, this stays approximately constant for its entire runtime.

To evaluate the topics produced by the algorithms, we again examined the most common words for each topic via a quantile summary, given in Appendix \ref{apdx:neurips}.
We find the subcluster split-merge algorithm appears to generate topics with slightly more semantic overlap compared to Algorithm \ref{alg:ppu-hdp}, but otherwise produces comparable output.

Finally, to assess scalability, we ran 25 000 iterations of Algorithm \ref{alg:ppu-hdp} on PubMed, which contains 768m tokens.
To our knowledge, this dataset is an order of magnitude larger than any datasets used in previous MCMC-based approaches for the HDP.
Computation was performed on a compute node with 2x10-core CPUs with 20 threads and 64GB of RAM.
The marginal likelihood and number of active topics are given in Figure \ref{fig:trace}(j) and Figure~\ref{fig:trace}(k).

To evaluate the topics discovered by the algorithm, we examined their most common words---these may be seen in full in Appendix \ref{apdx:pubmed}.
We observe that the semantic themes present in the topics vary according to how many tokens they have: topics with more tokens appear to be broader, whereas topics with fewer tokens appear to be more specific.
This behavior illustrates a key difference between the HDP and methods like LDA, which do not contain a learned global topic distribution $\v\Psi$ in their formulation. 
We suspect the effect is particularly pronounced on PubMed compared to CGCBIB and NeurIPS due to its large number of tokens.

\begin{figure*}[t!]
\begin{center}
\captionfont
\renewcommand{\arraystretch}{0.95}
\begin{tabular}{c C{0.145\textwidth} C{0.145\textwidth} C{0.145\textwidth} C{0.145\textwidth} C{0.145\textwidth}}
\cline{2-6}
$k$ & Topic 1 & Topic 5 & Topic 9 & Topic 13 & Topic 17
\\
$n_{k,\.}$ & 42 395 289 & 23 907 517 & 22 167 377 & 20 925 933 & 18 924 590
\\
\cline{2-6}
& care & cancer & protein & protein & cell \\
& health & tumor & binding & cell & neuron \\
& patient & patient & membrane & kinase & electron \\
\smash{\rotatebox[origin=c]{90}{\parbox{20ex}{\centering }}} & medical & cell & acid & expression & brain \\
& research & carcinoma & activity & receptor & rat \\
& system & breast & cell & activation & nerve \\
& clinical & tumour & gel & pathway & fiber \\
& cost & survival & human & phosphorylati & nucleus \\
\\
\end{tabular}
\begin{tabular}{c C{0.145\textwidth} C{0.145\textwidth} C{0.145\textwidth} C{0.145\textwidth} C{0.145\textwidth}}
\cline{2-6}
$k$ & Topic 21 & Topic 25 & Topic 29 & Topic 33 & Topic 37
\\
$n_{k,\.}$ & 18 033 777 & 16 308 024 & 15 128 822 & 13 562 338 & 10 819 160
\\
\cline{2-6}
& cell & rat & gene & infection & plant \\
& growth & day & mutation & strain & strain \\
& expression & mice & genetic & antibiotic & acid \\
\smash{\rotatebox[origin=c]{90}{\parbox{20ex}{\centering }}} & factor & liver & chromosome & bacterial & growth \\
& beta & animal & analysis & isolates & extract \\
& human & effect & genes & bacteria & activity \\
& mrna & control & polymorphism & resistance & cell \\
& endothelial & mg & dna & coli & production \\
\\
\end{tabular}

\vspace*{-2ex}
\end{center}
\caption{Top 8 words for topics obtained by Algorithm \ref{alg:ppu-hdp} on PubMed, together with topic index $k$ and total number of words $n_{k,\.}$ present in the topic. 
We observe that the topics range from broad to specific: this is a consequence of the hierarchical Dirichlet process prior via the inclusion of the global topic proportions $\v\Psi$. 
Topics obtained by Algorithm \ref{alg:ppu-hdp} on all corpora may be seen in Appendix \ref{apdx:ap}, Appendix \ref{apdx:cgcbib}, Appendix \ref{apdx:neurips}, and Appendix \ref{apdx:pubmed}.
}
\end{figure*}

\section{Discussion} \label{sec:discussion}

In this work, we introduce the parallel partially collapsed Gibbs sampler---Algorithm \ref{alg:pc-hdp}---for the HDP topic model, which converges to the correct target distribution.
We propose a doubly sparse approximate sampler---Algorithm \ref{alg:ppu-hdp}---which allows the HDP to be implemented with per-token sampling complexity of $\smash{\c{O}\sbr[1]{\min\del[1]{K^{(\m{m})}_{d(i)}, K^{(\m\Phi)}_{v(i)}}}}$ which is the same as that of P\'{o}lya Urn LDA \cite{terenin19b}.
Compared to other approaches for the HDP, it offers the following improvements.

\1 The algorithm is fully parallel in all steps.
\2 The topic indicators $\v{z}$ utilize all available sources of sparsity to accelerate sampling.
\3 All steps not involving $\v{z}$ have constant complexity with respect to data size.
\4 The proposed sparse approximate algorithm becomes exact as $N\->\infty$ and $K^*\->\infty$.
\0
These improvements allow us to train the HDP on larger corpora.
The data-parallel nature of our approach means that the amount of available parallelism increases with data size.
This parallelism avoids load-balancing-related scalability limitations pointed out by \textcite{gal14}.

Nonparametric topic models are less straightforward to evaluate empirically than ordinary topic models.
In particular, we found topic coherence scores \cite{mimno11} to be strongly affected by the number of active topics $K$, which causes preference for models with fewer topics and more semantic overlap per topic.
We view the development of summary statistics that are $K$-agnostic and those measuring other aspects of topic quality such as overlap, to be an important direction for future work.
We are particularly interested in techniques that can be used to compare algorithms for sampling from the same model defined over fully disjoint state spaces, such as Algorithm \ref{alg:ppu-hdp} and the subcluster split-merge algorithm in Section \ref{sec:results}.

Partially collapsed HDP can stabilize around a different local mode than fully collapsed HDP as proposed by \textcite{teh06a}.
There have been attempts to improve mixing in that sampler \cite{chang14}, including the use of Metropolis-Hastings steps for jumping between modes \cite{jain04}.
These techniques are largely complementary to ours and can be explored in combination with the ideas presented here.

The HDP posterior is a heavily multimodal target for which full posterior exploration is known to be difficult \cite{chang14,gal14,buntine14}, and sampling schemes are generally used more in the spirit of optimization than traditional MCMC.
These issues are mirrored in other approaches, such as variational inference. 
There, restrictive mean-field factorization assumptions are often required, which reduces the quality of discovered topics.
We view MAP-based analogs of ideas presented here as a promising direction, since these may allow additional flexibility that may enable faster training.

Many of the ideas in this work, such as the binomial trick, are generic and apply to any topic model structurally similar to the HDP's GEM representation \cite{teh06a} given in Section \ref{sec:alg}.
For example, one could consider an informative prior for $\v\Psi$ in lieu of $\f{GEM}(\gamma)$, potentially improving convergence and topic quality, or developing parallel schemes for other nonparametric topic models such as Pitman-Yor models \cite{teh06b}, tree-based models \cite{hu12,paisley15}, embedded topic models \cite{dieng19}, as well as nonparametric topic models used within data-efficient language models \cite{guo19} in future work.

\paragraph{Conclusion}
We introduce the doubly sparse partially collapsed Gibbs sampler for the hierarchical Dirichlet process topic model.
By formulating this algorithm using a representation of the HDP which connects it with the well-studied Latent Dirichlet Allocation model, we obtain a parallel algorithm whose per-token sampling complexity is the minima of two sparsity terms.
The ideas used apply to a large array of topic models, for example, dynamic topic models with $\m\Phi$ time-varying, which possess the same full conditional for $\v{z}$.
Our algorithm for the HDP scales to a 768m-token corpus (PubMed) on a single multicore machine in under four days.

The proposed techniques leverage parallelism and sparsity to scale nonparametric topic models to larger datasets than previously considered feasible for MCMC or other methods possessing similar convergence properties.
We hope these contributions enable wider use of Bayesian nonparametrics for large collections of text.

\paragraph{Acknowledgments}

The research was funded by the Academy of Finland (grants 298742, 313122), as well as the Swedish Research Council (grants 201805170, 201806063).
Computations were performed using compute resources within the Aalto University School of Science and Department of Computing at Imperial College London. We also acknowledge the support of Ericsson AB.

\bibliographystyle{acl_natbib}
\bibliography{references}

\onecolumn
\appendix

\section{Appendix: sufficiency of $\vec{l}$ and full conditional for $\vec{b}$}
\label{apdx:sufficiency}

Recall that the one-step-ahead conditional probability mass function in a P\'{o}lya sequence taking values in $\N$ with concentration parameter $\alpha$ and base probability mass function $\v\Psi$ is
\[
p(z_i \given z_{i-1},..,z_1, \v\Psi) = \sum_{j=1}^{i-1} \frac{1}{i - 1 + \alpha} \1_{z_j}(z_i) + \frac{\alpha}{i-1+\alpha} \Psi_{z_i}
.
\]
Introducing the random variable
\[
b_i \~[Ber]\del{\frac{\alpha}{i-1+\alpha}}
\]
we can express the one-step-ahead conditional distribution as
\[
p(z_i \given z_{i-1},..,z_1, b_i, \v\Psi) = \1_{b_i=0} \sum_{j=1}^{i-1} \frac{1}{i - 1} \1_{z_j}(z_i) + \1_{b_i = 1} \Psi_{z_i}
.
\]
The joint probability mass function for $\v{z} \given \v{b},\v\Psi$ is then
\[
p(\v{z}\given\v{b},\v\Psi) = \prod_{i=1}^N p(z_i \given z_{i-1},..,z_1, \v{b},\v\Psi) = \prod_{i=1}^N \sbr{\sum_{j=1}^{i-1} \1_{b_i=0} \1_{z_j}(z_i) + \1_{b_i=1} \Psi_{z_i}}
.
\]
Note that $\1_{b_i = 0} = 1 \iff \1_{b_i=1} = 0$ and vice versa.
Thus each term in the product for $\v{z}\given\v{b},\v\Psi$ only has one component, and we may express $\v{z} \given \v{b},\v\Psi$ as
\[
p(\v{z} \given \v{b}, \v\Psi) = \underbracket[0.1ex]{\prod_{\substack{i = 1 \\ b_i \neq 1}}^N \sum_{j=1}^{i-1} \frac{1}{i-1} \1_{z_j}(z_i)}_{\t{doesn't enter posterior}}  \prod_{\substack{i = 1 \\ b_i = 1}}^N \prod_{k=1}^\infty \Psi_k^{\1_k(z_i)}
\]
where we have re-expressed the probability mass function of $\v\Psi$ in a form that emphasizes conjugacy.
Thus for any prior, the posterior will only depend on the likelihood of the values of $z_i$ for which $b_i = 1$.
The sufficient statistic is
\[
l_k = \sum_{\substack{i=1\\b_i = 1}}^N \1_{z_i = k}
.
\]
Next, for a given $i' \in \{1,..,N\}$, we can calculate the posterior of a component $b_{i'}$ as
\<
\P(b_{i'} = 1 \given \v{z},\v\Psi, \v{b}_{-i'}) &\propto \del{\frac{\alpha}{i'-1+\alpha}} \prod_{\substack{i = 1 \\ b_i \neq 1}}^N \sum_{j=1}^{i-1} \frac{1}{i-1} \1_{z_j}(z_i)  \prod_{\substack{i = 1 \\ b_i = 1}}^N \Psi_{z_i}
\\
&\propto \alpha\Psi_{z_{i'}}
\\
\P(b_{i'} = 0 \given \v{z},\v\Psi, \v{b}_{-i'}) &\propto \del{\frac{i'-1}{i'-1+\alpha}} \prod_{\substack{i = 1 \\ b_i \neq 1}}^N \sum_{j=1}^{i-1} \frac{1}{i-1} \1_{z_j}(z_i)  \prod_{\substack{i = 1 \\ b_i = 1}}^N \Psi_{z_i}
\\
&\propto \sum_{i=1}^{i'-1} \1_{z_i}(z_i')
\>
where we have divided both expressions by
\[
\frac{1}{i'-1+\alpha} \prod_{\substack{i = 1 \\ b_i \neq 1\\ i \neq i'}}^N \sum_{j=1}^{i-1} \frac{1}{i-1} \1_{z_j}(z_i) \prod_{\substack{i = 1 \\ b_i = 1\\ i \neq i'}}^N \Psi_{z_i}
\]
which is constant with respect to $b_{i'}$.
Note that full conditionally, we have $b_i \indep b_{i'}$ for $i\neq i'$.
This gives the desired expressions and concludes the derivation.

\section{Appendix: full conditional for $\mPsi$}
\label{apdx:gem}

Before proceeding with the derivation, we first comment on Proposition \ref{prop:gem} and differences between the GEM distribution and Dirichlet process, which otherwise appear superficially similar.
The GEM distribution $\Psi^{\f{GEM}} \~[GEM](\gamma)$ is defined as
\<
\Psi_k^{\f{GEM}} &= \varsigma_k \prod_{i=1}^{k-1} (1 - \varsigma_i)
&
\varsigma_k^{\f{GEM}} &\~[Beta](1,\gamma)
.
\>
On the other hand, a Dirichlet process $\Psi^{\f{DP}}\~[DP](\gamma,F)$ is defined as 
\<
\v\Psi^{\f{DP}} &= \sum_{k=1}^\infty \pi_k \delta_{\vartheta_k}
&
\vartheta_k &\~ F
&
\pi_k &= \varsigma_k \prod_{i=1}^{k-1} (1 - \varsigma_i)
&
\varsigma_k &\~[Beta](1,\gamma)
.
\>
From a Bayesian perspective, this extra stage---the presence of $\vartheta_k$---prevents one from applying standard results on conjugacy of Dirichlet processes.
The joint distribution of a finite set of states $(\Psi_{k_1}^{\f{GEM}},..,\Psi_{k_K}^{\f{GEM}})$ does not admit a closed-form expression, so we seek to derive the posterior conditional in a different way.

Rather than proving conjugacy for $(\Psi_{k_1}^{\f{GEM}},..,\Psi_{k_K}^{\f{GEM}})$ directly, we look for a larger finite-dimensional distribution within which $(\Psi_{k_1}^{\f{GEM}},..,\Psi_{k_K}^{\f{GEM}})$ sits that has better conjugacy properties.
The \emph{generalized Dirichlet} distribution of \textcite{connor69} fulfills this criteria. 
The conjugacy relationship we seek follows from the general property that conditioning and marginalization commute.
This will be shown to yield the posterior
\<
\Psi_k^{\f{GEM}} &= \varsigma_k \prod_{i=1}^{k-1} (1 - \varsigma_i)
&
\varsigma_k &\~[Beta](a^{(\v\Psi)}_k,b^{(\v\Psi)}_k)
&
a^{(\v\Psi)}_k &= 1 + l_k
&
b^{(\v\Psi)}_k &= \gamma + \sum_{i=k+1}^{\infty} l_i
.
\>
For comparison, a posterior Dirichlet process is given by
\<
\v\Psi^{\f{DP}} &= \sum_{k=1}^\infty \pi_k \delta_{\vartheta_k}
&
\vartheta_k &\~ \frac{n}{\alpha+n}\v{l} + \frac{\alpha}{\alpha+n}F
&
\pi_k &= \varsigma_k \prod_{i=1}^{k-1} (1 - \varsigma_i)
&
\varsigma_k &\~[Beta](1,\gamma + n)
\>
which shows that this relatively mild difference in the prior yields a posterior of a rather different form.

We now proceed to formally calculate this posterior distribution, starting from a GEM prior and discrete likelihood.
Since we are working in a nonparametric setting, we begin by introducing the necessary formalism.
We then introduce our finite-dimensional approximating prior and compute the posterior under it. For this, we use commutativity of conditioning and marginalization to deduce the full infinite-dimensional posterior.

\begin{definition}[Preliminaries]
Let $(\Omega,\s{F},\P)$ be a probability space.
Let $\c{M}_s(\N)$ be the space of signed measures, equipped with the topology of weak convergence.
Let $\c{M}_1(\N) \subset \c{M}_s(\N)$ be the space of probability measures over $\N$, and identify $\c{M}_1(\N)$ with the probability simplex by the homeomorphism $\c{M}_1(\N) \isom \{\v{x} \in \ell^1 : \forall i, x_i > 0, \sum_{i=1}^\infty x_i = 1\}$.
Let $N\in\N$, let $\v{x} \in \N^N$, and let $\v{l} \in N^N$ be its empirical counts, defined by $\v{l} = \sum_{i=1}^N \v{1}_{x_i}$ where $\v{1}_{x_i}$ is equal to $1$ for coordinate $x_i$ and $0$ for all other coordinate.
Let $\gamma \in \R^+$.
Recall that $\N^N$ and $\c{M}_1(\N)$, endowed with the discrete topology and topology of weak convergence, respectively, are both Polish spaces---hence, the Disintegration Theorem (\textcite{ambrosio07b}, Theorem 5.3.1; \textcite{bogachev07b}, Corollary 10.4.15) holds in both spaces.
We associate each random variable $y:\Omega\-> Y$ with its pushforward probability measure $\pi_y(A_y) = \sbr{y_*\P}(A_y) = \P[y^{-1}(A_y)]$, and each conditional random variables $\theta\given y:\Omega\x Y \-> \Theta$ with its pushforward regular conditional probability measure $\pi_{y\given\theta}(A_y \given \theta) = \sbr{(y\given\theta)_*\P}(A_y) = \P[(y\given\theta)^{-1}(A_y)]$, where the preimage is taken with respect to $y$.
\end{definition}

\begin{definition}[Discrete likelihood]
For all $\v\Psi \in \c{M}_1(\N)$, define the conditional random variable $\v{x}\given\v\Psi : \Omega \x \c{M}_1(\N)\-> \N^N$ by its probability mass function
\[
p(\v{x}\given\v\Psi) = \prod_{i=1}^N \prod_{k=1}^\infty \Psi_k^{\1_k(x_i)}
.
\]
We say $\v{x}\given\v\Psi \~[Discrete](\v\Psi)$.
\end{definition}

\begin{definition}[GEM]
Let $\v\Psi:\Omega\-> \c{M}_1(\N)$ be a random variable defined by
\<
\Psi_k &= \varsigma_k \prod_{i=1}^{k-1} (1 - \varsigma_i)
&
\varsigma_k &\~[Beta](1,\gamma)
.
\>
We say $\v\Psi\~[GEM](\gamma)$.
\end{definition}

\begin{definition}[Finite GEM]
Let $\v\Psi:\Omega\-> \c{M}_1(\N)$ be a random variable defined by
\<
\Psi_k &= \varsigma_k \prod_{i=1}^{k-1} (1 - \varsigma_i)
&
\varsigma_k &\~[Beta](1,\gamma)
&
\varsigma_K &= 1
.
\>
We say $\v\Psi\~[FGEM](\gamma,K)$.
\end{definition}

\begin{definition}[Posterior]
Let $\v\Psi\given\v{x}$ be the unique conditional random variable given by the Disintegration Theorem, where uniqueness follows from almost sure uniqueness by virtue of the marginal measure $\pi_{\v{x}}(\cdot) = \int_{\c{M}_1(\N)}\pi_{\v{x}\given\v\Psi}(\cdot\given\v\Psi) \d\pi_{\Psi}$ being absolutely continuous with respect to the counting measure on $\N^N$, which has no non-empty null sets.
\end{definition}

\begin{result} \label{res:gd}
Let $\v{x}\given\v\Psi \~[Discrete](\v\Psi)$.
Let $\v{x}\in\N^N$, and let $K > \sup \v{x}$.
Let $\v\Psi\~[FGEM](\gamma,K)$.
Then for any $\v{x}$ with empirical counts $\v{l}$, we have that $\v\Psi\given\v{x} : \Omega \x \N^N \-> \c{M}_1(\N)$ is a conditional random variable defined by
\<
\Psi_k &= \varsigma_k \prod_{i=1}^{k-1} (1 - \varsigma_i)
&
\varsigma_k &\~[Beta](a^{(\v\Psi)}_k,b^{(\v\Psi)}_k)
&
\varsigma_K &= 1
\>
where
\<
a^{(\v\Psi)}_k &= 1 + l_k
&
b^{(\v\Psi)}_k &= \gamma + \sum_{i=k+1}^K l_i
.
\>
\end{result}

\begin{proof}
It is shown by \textcite{connor69} that $\v\Psi\~[FGEM](\gamma,K)$ is a special case of the \emph{generalized Dirichlet} distribution, which admits a general stick-breaking representation.
Thus, its probability density function is
\[
f(\v\Psi) \propto \Psi_K^{\gamma - 1} \prod_{k=1}^{K-1} \sbr{\sum_{k'=1}^K \Psi_{k'}}^{-1}
\]
which we have expressed in a simplified form.
By conjugacy, for a given $\v{x}$ and associated $\v{l}$ the posterior probability density is
\[
f(\v\Psi\given\v{x}) \propto \Psi_K^{(\gamma + l_K) - 1} \prod_{k=1}^{K-1}\sbr{\Psi_k^{(1+l_k)-1} \sbr{\sum_{k'=k}^K \Psi_{k'}}^{\gamma + \sum_{i=k}^K l_i - \sbr{(1 + l_k) + \gamma + \sum_{i=k+1}^K l_i}}}
\]
which is again a generalized Dirichlet admitting the necessary stick-breaking representation, which we have expressed in a form that emphasizes its posterior hyperparameters.
\end{proof}

\begin{remark}
It is now clear that the assumption $\v{x}\given\v\Psi \~[Discrete](\v\Psi)$ is indeed taken without loss of generality, because if we instead took $\v{x}\given\v\Psi$ to be given by a P\'{o}lya sequence, then by sufficiency the prior-to-posterior map would be identical.
\end{remark}

{
\renewcommand{\thetheorem}{1}
\begin{proposition}
Without loss of generality, suppose
\<
\v\Psi &\~[GEM](\gamma)
&
\v{x}\given\v\Psi &\~[Discrete](\v\Psi)
.
\>
Then $\v\Psi\given\v{x}$ is given by
\<
\label{eqn:posterior-gem}
\Psi_k &= \varsigma_k \prod_{i=1}^{k-1} (1 - \varsigma_i)
&
\varsigma_k &\~[Beta](a^{(\v\Psi)}_k,b^{(\v\Psi)}_k)
&
a^{(\v\Psi)}_k &= 1 + l_k
&
b^{(\v\Psi)}_k &= \gamma + \sum_{i=k+1}^{\infty} l_i
\>
where $\v{l}$ are the empirical counts of $\v{x}$.
\end{proposition}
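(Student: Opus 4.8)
The plan is to reduce the infinite-dimensional posterior to the finite-dimensional conjugacy relationship already established in Result~\ref{res:gd} by exploiting the fact that only finitely many coordinates of $\v\Psi$ ever enter the likelihood. Since $\v{x}\in\N^N$ consists of finitely many observations, $\sup\v{x}$ is finite; fix any integer $K > \sup\v{x}$. Writing the sticks as $\v\varsigma=(\varsigma_1,\varsigma_2,\dots)$, the first step is to observe that the probability mass function $p(\v{x}\given\v\Psi)=\prod_{i=1}^N\prod_{k=1}^\infty \Psi_k^{\1_k(x_i)}$ depends on $\v\Psi$ only through $\Psi_1,\dots,\Psi_{\sup\v{x}}$, and hence only through $\varsigma_1,\dots,\varsigma_{K-1}$. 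In particular the likelihood is measurable with respect to the sub-vector $\v\varsigma_{<K}=(\varsigma_1,\dots,\varsigma_{K-1})$ and does not involve $\varsigma_k$ for $k\ge K$.

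Next I would compare the $\f{GEM}(\gamma)$ prior with the $\f{FGEM}(\gamma,K)$ prior of the earlier definition. Both place the same law on $\v\varsigma_{<K}$, namely $K-1$ independent $\f{Beta}(1,\gamma)$ sticks; they differ only in how they treat the tail, the former continuing with further independent sticks and the latter deterministically setting $\varsigma_K=1$. Because the $\f{GEM}$ sticks are mutually independent, the tail $\v\varsigma_{\ge K}=(\varsigma_K,\varsigma_{K+1},\dots)$ is independent of $\v\varsigma_{<K}$ under the prior, and since the likelihood is a function of $\v\varsigma_{<K}$ alone, the joint posterior factorizes as the posterior of $\v\varsigma_{<K}$ times the unchanged prior of $\v\varsigma_{\ge K}$. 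This is the sense in which conditioning and marginalization commute: the marginal posterior of $\v\varsigma_{<K}$ may be computed in the reduced finite-dimensional model, while the tail sticks retain their prior law conditionally independently of $\v\varsigma_{<K}$.

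The reduced problem---prior $\f{Beta}(1,\gamma)^{\otimes(K-1)}$ on $\v\varsigma_{<K}$ with likelihood $p(\v{x}\given\v\varsigma_{<K})$---is \emph{identical} to the one solved in Result~\ref{res:gd}, since the truncation $\varsigma_K=1$ affects only $\Psi_K$, which the likelihood ignores, and imposes no constraint on $\v\varsigma_{<K}$. Invoking Result~\ref{res:gd} therefore yields $\varsigma_k\~[Beta](1+l_k,\,\gamma+\sum_{i=k+1}^K l_i)$ for $k<K$. Because $K>\sup\v{x}$ forces $l_i=0$ for all $i\ge K$, the finite sum equals the infinite sum $\sum_{i=k+1}^\infty l_i$, matching the claimed $b^{(\v\Psi)}_k$. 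For the tail the retained prior gives $\varsigma_k\~[Beta](1,\gamma)$ for $k\ge K$, which coincides with the asserted parameters $a^{(\v\Psi)}_k=1+l_k=1$ and $b^{(\v\Psi)}_k=\gamma+\sum_{i>k}l_i=\gamma$, since $l_k=0$ there. Assembling the two pieces gives the stated stick-breaking posterior uniformly in $k$, and the argument is independent of the choice of $K$.

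I expect the main obstacle to be making this factorization and the commutativity step measure-theoretically rigorous in the genuinely infinite-dimensional setting. One must invoke the Disintegration Theorem, as set up in the Preliminaries, to guarantee the almost-sure uniqueness of $\v\Psi\given\v{x}$ as a regular conditional distribution on $\c{M}_1(\N)$; verify that the $\sigma$-algebra generated by $\v\varsigma_{<K}$ genuinely decouples the tail under the product-of-independent-sticks prior; and confirm that the pushforward of the $\v\varsigma_{<K}$-posterior recovers the law of the first $K-1$ coordinates of $\v\Psi$, so that the $\f{FGEM}$ and $\f{GEM}$ posteriors agree as distributions rather than merely formally. Once this decoupling is justified, the reconciliation of the Beta parameters is routine bookkeeping.
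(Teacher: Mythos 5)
Your proposal is correct and follows essentially the same route as the paper's proof: both reduce the claim to the finite-dimensional conjugacy of Result~\ref{res:gd} by truncating at some $K>\sup\v{x}$ and invoking commutativity of conditioning and marginalization via the Disintegration Theorem. The only difference is cosmetic---you carry out the reduction on the stick variables $\v\varsigma$ and explicitly justify, via prior independence of the sticks and measurability of the likelihood with respect to $\v\varsigma_{<K}$, the step that the paper compresses into ``by construction, $\v\Psi^{(K)}_I\given\v{x}$ equals $\v\Psi_I\given\v{x}$ in distribution.''
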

}

\begin{proof}
Let $I \subset \N$ be an arbitrary finite index set, and let $\v\Psi_I \given \v{x}$ be the finite-dimensional marginal projection of $\v\Psi\given\v{x}$ onto the coordinates contained in $I$.
Let $K > \sup I$, let $\v\Psi^{(K)}\given\v{x}$ be the posterior conditional random variable under $\v\Psi^{(K)}\~[FGEM](\gamma,K)$, and let $\v\Psi^{(K)}_I\given\v{x}$ be the marginal consisting of those coordinates contained in $I$.
By construction, $\v\Psi^{(K)}_I\given\v{x}$ equals $\v\Psi_I\given\v{x}$ in distribution.
Since by the Disintegration Theorem, conditioning and marginalization commute, the set $I$ is arbitrary, and $\v\Psi\given\v{x}$ is uniquely determined by its finite-dimensional marginal projections, the claim follows.
\end{proof}

\newpage
\section{Appendix: quantile summary of topics for \textsc{AP}}
\label{apdx:ap}

Here we display a multi-quantile summary for AP, obtained by ranking all topics with at least 100 tokens by their total number of tokens, computing the $\varpi=100\%$, $75\%$, $50\%$, $25\%$, and $5\%$ quantiles.
We compute the five topics closest to each quantile by number of tokens, and display their top-eight words.

\begin{center}
\captionfont
\renewcommand{\arraystretch}{0.95}
\begin{tabular}{c C{0.145\textwidth} C{0.145\textwidth} C{0.145\textwidth} C{0.145\textwidth} C{0.145\textwidth}}
\cline{2-6}
$k$ & Topic 1 & Topic 2 & Topic 3 & Topic 4 & Topic 5
\\
$n_{k,\.}$ & 93 207 & 57 249 & 15 874 & 13 360 & 10 176
\\
\cline{2-6}
& week & people & police & percent & trial \\
& made & years & people & year & court \\
& president & year & killed & prices & charges \\
\smash{\rotatebox[origin=c]{90}{\parbox{20ex}{\centering AP\\partially collapsed\\$\varpi = 100\%$}}} & officials & time & man & economic & case \\
& tuesday & don & officials & economy & judge \\
& million & back & city & rate & attorney \\
& thursday & day & shot & increase & prison \\
& national & home & authorities & report & jury \\
\\
\end{tabular}
\begin{tabular}{c C{0.145\textwidth} C{0.145\textwidth} C{0.145\textwidth} C{0.145\textwidth} C{0.145\textwidth}}
\cline{2-6}
$k$ & Topic 54 & Topic 55 & Topic 56 & Topic 57 & Topic 58
\\
$n_{k,\.}$ & 1 055 & 1 032 & 1 025 & 1 014 & 1 013
\\
\cline{2-6}
& children & north & hostages & aids & percent \\
& parents & walsh & red & virus & poll \\
& child & reagan & release & blood & survey \\
\smash{\rotatebox[origin=c]{90}{\parbox{20ex}{\centering AP\\partially collapsed\\$\varpi = 75\%$}}} & ms & iran & held & disease & points \\
& year & contra & hostage & drug & found \\
& mother & documents & anderson & infected & surveys \\
& boys & gesell & gunmen & immune & margin \\
& girl & arms & thursday & health & reported \\
\\
\end{tabular}
\begin{tabular}{c C{0.145\textwidth} C{0.145\textwidth} C{0.145\textwidth} C{0.145\textwidth} C{0.145\textwidth}}
\cline{2-6}
$k$ & Topic 108 & Topic 109 & Topic 110 & Topic 111 & Topic 112
\\
$n_{k,\.}$ & 473 & 472 & 451 & 446 & 436
\\
\cline{2-6}
& abortion & women & solidarity & waste & train \\
& souter & club & walesa & garbage & railroad \\
& anti & members & poland & recycling & cars \\
\smash{\rotatebox[origin=c]{90}{\parbox{20ex}{\centering AP\\partially collapsed\\$\varpi = 50\%$}}} & state & men & polish & city & trains \\
& women & male & government & ash & transportatio \\
& abortions & membership & mazowiecki & trash & skinner \\
& rights & female & jaruzelski & state & transit \\
& hampshire & black & talks & dump & policy \\
\\
\end{tabular}
\begin{tabular}{c C{0.145\textwidth} C{0.145\textwidth} C{0.145\textwidth} C{0.145\textwidth} C{0.145\textwidth}}
\cline{2-6}
$k$ & Topic 162 & Topic 163 & Topic 164 & Topic 165 & Topic 166
\\
$n_{k,\.}$ & 193 & 187 & 185 & 184 & 184
\\
\cline{2-6}
& health & wine & miners & dixon & barry \\
& care & warmus & mine & yates & moore \\
& spe & solomon & coal & count & jackson \\
\smash{\rotatebox[origin=c]{90}{\parbox{20ex}{\centering AP\\partially collapsed\\$\varpi = 25\%$}}} & bc & california & mines & tosh & mayor \\
& weight & bar & hull & rogers & statehood \\
& american & gallo & pittston & rig & mr \\
& diet & test & benefits & russell & gregory \\
& cholesterol & questions & platform & cookies & room \\
\\
\end{tabular}
\begin{tabular}{c C{0.145\textwidth} C{0.145\textwidth} C{0.145\textwidth} C{0.145\textwidth} C{0.145\textwidth}}
\cline{2-6}
$k$ & Topic 206 & Topic 207 & Topic 208 & Topic 209 & Topic 210
\\
$n_{k,\.}$ & 117 & 115 & 112 & 111 & 111
\\
\cline{2-6}
& pageant & mall & roberts & stuart & gold \\
& miss & malls & shell & lawn & polaroid \\
& cereal & pinochet & boigny & dea & shamrock \\
\smash{\rotatebox[origin=c]{90}{\parbox{20ex}{\centering AP\\partially collapsed\\$\varpi = 5\%$}}} & boxes & shopping & houphouet & boston & fields \\
& contestants & downtown & travelers & ruth & consolidated \\
& box & park & leonard & yankees & suit \\
& america & oak & arsenal & foundation & proposals \\
& bruce & usa & oil & richman & mining \\
\\
\end{tabular}

\end{center}

\begin{center}
\captionfont
\renewcommand{\arraystretch}{0.95}
\begin{tabular}{c C{0.145\textwidth} C{0.145\textwidth} C{0.145\textwidth} C{0.145\textwidth} C{0.145\textwidth}}
\cline{2-6}
$k$ & Topic 1 & Topic 2 & Topic 3 & Topic 4 & Topic 5
\\
$n_{k,\.}$ & 90 497 & 18 626 & 10 832 & 9 923 & 9 430
\\
\cline{2-6}
& year & years & police & dollar & percent \\
& people & year & people & market & year \\
& time & people & killed & stock & rose \\
\smash{\rotatebox[origin=c]{90}{\parbox{20ex}{\centering AP\\direct assignment\\$\varpi = 100\%$}}} & president & time & government & yen & sales \\
& years & don & reported & index & million \\
& made & home & today & late & billion \\
& state & day & capital & trading & month \\
& week & back & violence & exchange & reported \\
\\
\end{tabular}
\begin{tabular}{c C{0.145\textwidth} C{0.145\textwidth} C{0.145\textwidth} C{0.145\textwidth} C{0.145\textwidth}}
\cline{2-6}
$k$ & Topic 93 & Topic 94 & Topic 95 & Topic 96 & Topic 97
\\
$n_{k,\.}$ & 784 & 757 & 753 & 745 & 738
\\
\cline{2-6}
& keating & bus & eastern & united & smoking \\
& deconcini & driver & pilots & states & cigarettes \\
& lincoln & train & airline & nations & farmers \\
\smash{\rotatebox[origin=c]{90}{\parbox{20ex}{\centering AP\\direct assignment\\$\varpi = 75\%$}}} & senators & greyhound & orion & resolution & tobacco \\
& regulators & accident & air & international & ban \\
& meeting & passengers & union & plo & insurance \\
& committee & railroad & airlines & mission & batus \\
& gray & passenger & service & assembly & smokers \\
\\
\end{tabular}
\begin{tabular}{c C{0.145\textwidth} C{0.145\textwidth} C{0.145\textwidth} C{0.145\textwidth} C{0.145\textwidth}}
\cline{2-6}
$k$ & Topic 186 & Topic 187 & Topic 188 & Topic 189 & Topic 190
\\
$n_{k,\.}$ & 346 & 338 & 338 & 338 & 334
\\
\cline{2-6}
& power & cable & conservatives & water & dental \\
& franc & television & flag & dam & funds \\
& jersey & nbc & conservative & river & claims \\
\smash{\rotatebox[origin=c]{90}{\parbox{20ex}{\centering AP\\direct assignment\\$\varpi = 50\%$}}} & bradley & tempo & amendment & area & plough \\
& utility & hsn & speaker & reservoir & oral \\
& wppss & industry & darman & savannah & counter \\
& utilities & subscribers & kemp & corps & embassy \\
& west & tv & republicans & canyon & mid \\
\\
\end{tabular}
\begin{tabular}{c C{0.145\textwidth} C{0.145\textwidth} C{0.145\textwidth} C{0.145\textwidth} C{0.145\textwidth}}
\cline{2-6}
$k$ & Topic 279 & Topic 280 & Topic 281 & Topic 282 & Topic 283
\\
$n_{k,\.}$ & 220 & 219 & 219 & 219 & 218
\\
\cline{2-6}
& fernandez & water & bloom & canadian & election \\
& fdic & lake & minnick & lee & grenada \\
& republicbank & mussels & walters & ritalin & boigny \\
\smash{\rotatebox[origin=c]{90}{\parbox{20ex}{\centering AP\\direct assignment\\$\varpi = 25\%$}}} & weicker & neill & lawyer & murphy & houphouet \\
& virginia & erie & athletes & domestic & gairy \\
& ruth & problem & college & security & coast \\
& robinson & plant & suspect & woods & nov \\
& station & north & signing & radio & failed \\
\\
\end{tabular}
\begin{tabular}{c C{0.145\textwidth} C{0.145\textwidth} C{0.145\textwidth} C{0.145\textwidth} C{0.145\textwidth}}
\cline{2-6}
$k$ & Topic 354 & Topic 355 & Topic 356 & Topic 357 & Topic 358
\\
$n_{k,\.}$ & 133 & 133 & 133 & 132 & 132
\\
\cline{2-6}
& machine & young & count & reynolds & turkey \\
& stop & johnston & forman & premier & department \\
& reed & golf & festival & bond & bird \\
\smash{\rotatebox[origin=c]{90}{\parbox{20ex}{\centering AP\\direct assignment\\$\varpi = 5\%$}}} & gun & notes & rig & release & cooking \\
& chief & bodies & arts & news & wash \\
& sununu & homes & hughes & regulated & bacteria \\
& geneva & call & lights & address & stuffed \\
& formal & shortage & staged & petition & adams \\
\\
\end{tabular}

\end{center}

\newpage
\section{Appendix: quantile summary of topics for \textsc{CGCBIB}}
\label{apdx:cgcbib}

Here we display a multi-quantile summary for CGCBIB, obtained by ranking all topics with at least 100 tokens by their total number of tokens, computing the $\varpi=100\%$, $75\%$, $50\%$, $25\%$, and $5\%$ quantiles.
We compute the five topics closest to each quantile by number of tokens, and display their top-eight words.

\begin{center}
\captionfont
\renewcommand{\arraystretch}{0.95}
\begin{tabular}{c C{0.145\textwidth} C{0.145\textwidth} C{0.145\textwidth} C{0.145\textwidth} C{0.145\textwidth}}
\cline{2-6}
$k$ & Topic 1 & Topic 2 & Topic 3 & Topic 4 & Topic 5
\\
$n_{k,\.}$ & 110 702 & 58 811 & 27 084 & 21 215 & 19 832
\\
\cline{2-6}
& elegans & elegans & elegans & gene & mutations \\
& caenorhabditi & protein & genetic & elegans & gene \\
& nematode & caenorhabditi & molecular & sequence & mutants \\
\smash{\rotatebox[origin=c]{90}{\parbox{20ex}{\centering CGCBIB\\partially collapsed\\$\varpi = 100\%$}}} & results & gene & development & protein & genes \\
& found & function & caenorhabditi & caenorhabditi & mutant \\
& show & proteins & nematode & amino & elegans \\
& observed & required & studies & cdna & caenorhabditi \\
& specific & show & model & acid & alleles \\
\\
\end{tabular}
\begin{tabular}{c C{0.145\textwidth} C{0.145\textwidth} C{0.145\textwidth} C{0.145\textwidth} C{0.145\textwidth}}
\cline{2-6}
$k$ & Topic 54 & Topic 55 & Topic 56 & Topic 57 & Topic 58
\\
$n_{k,\.}$ & 2 166 & 2 061 & 2 048 & 2 040 & 2 025
\\
\cline{2-6}
& germ & egl & emb & spe & wnt \\
& germline & egg & temperature & sperm & mom \\
& early & laying & mutants & fer & pop \\
\smash{\rotatebox[origin=c]{90}{\parbox{20ex}{\centering CGCBIB\\partially collapsed\\$\varpi = 75\%$}}} & granules & serotonin & sensitive & spermatozoa & signaling \\
& cells & neurons & zyg & membrane & bar \\
& embryos & cat & maternal & spermatids & pathway \\
& somatic & dopamine & expression & spermatogenes & lin \\
& line & mutants & embryonic & pseudopod & wrm \\
\\
\end{tabular}
\begin{tabular}{c C{0.145\textwidth} C{0.145\textwidth} C{0.145\textwidth} C{0.145\textwidth} C{0.145\textwidth}}
\cline{2-6}
$k$ & Topic 109 & Topic 110 & Topic 111 & Topic 112 & Topic 113
\\
$n_{k,\.}$ & 930 & 916 & 915 & 900 & 893
\\
\cline{2-6}
& vit & binding & kinesin & growth & eat \\
& yolk & affinity & klp & survival & pharyngeal \\
& vitellogenin & site & transport & mortality & pharynx \\
\smash{\rotatebox[origin=c]{90}{\parbox{20ex}{\centering CGCBIB\\partially collapsed\\$\varpi = 50\%$}}} & genes & activity & motor & population & pumping \\
& yp & sites & ift & rate & inx \\
& proteins & avermectin & cilia & populations & gap \\
& vpe & elegans & dynein & parameter & feeding \\
& lrp & membrane & movement & size & junctions \\
\\
\end{tabular}
\begin{tabular}{c C{0.145\textwidth} C{0.145\textwidth} C{0.145\textwidth} C{0.145\textwidth} C{0.145\textwidth}}
\cline{2-6}
$k$ & Topic 164 & Topic 165 & Topic 166 & Topic 167 & Topic 168
\\
$n_{k,\.}$ & 386 & 369 & 368 & 364 & 360
\\
\cline{2-6}
& mlc & dom & innate & vha & ife \\
& mel & effects & immune & atpase & cap \\
& myosin & humic & immunity & subunit & eife \\
\smash{\rotatebox[origin=c]{90}{\parbox{20ex}{\centering CGCBIB\\partially collapsed\\$\varpi = 25\%$}}} & nmy & pyrene & abf & genes & capping \\
& chain & effect & lys & vacuolar & cel \\
& elongation & bioconcentrat & toll & subunits & gtp \\
& rho & dissolved & antimicrobial & atpases & isoforms \\
& phosphatase & substances & pathway & type & rna \\
\\
\end{tabular}
\begin{tabular}{c C{0.145\textwidth} C{0.145\textwidth} C{0.145\textwidth} C{0.145\textwidth} C{0.145\textwidth}}
\cline{2-6}
$k$ & Topic 208 & Topic 209 & Topic 210 & Topic 211 & Topic 212
\\
$n_{k,\.}$ & 141 & 141 & 140 & 140 & 136
\\
\cline{2-6}
& ubq & asp & da & ion & hcf \\
& gc & salmonella & cl & diet & cehcf \\
& tbp & poona & fli & relative & vp \\
\smash{\rotatebox[origin=c]{90}{\parbox{20ex}{\centering CGCBIB\\partially collapsed\\$\varpi = 5\%$}}} & footprints & enterica & gs & xpa & ldb \\
& oscillin & clp & db & groups & cell \\
& tlf & serotype & glu & carbon & mammalian \\
& ubiquitin & necrotic & phospholipid & characteristi & phosphorylati \\
& tata & mug & tg & atoms & neural \\
\\
\end{tabular}

\end{center}

\begin{center}
\captionfont
\renewcommand{\arraystretch}{0.95}
\begin{tabular}{c C{0.145\textwidth} C{0.145\textwidth} C{0.145\textwidth} C{0.145\textwidth} C{0.145\textwidth}}
\cline{2-6}
$k$ & Topic 1 & Topic 2 & Topic 3 & Topic 4 & Topic 5
\\
$n_{k,\.}$ & 65 059 & 41 005 & 33 714 & 27 221 & 22 813
\\
\cline{2-6}
& elegans & elegans & elegans & mutations & elegans \\
& caenorhabditi & genetic & caenorhabditi & elegans & gene \\
& protein & caenorhabditi & nematode & gene & sequence \\
\smash{\rotatebox[origin=c]{90}{\parbox{20ex}{\centering CGCBIB\\direct assignment\\$\varpi = 100\%$}}} & gene & nematode & results & mutants & caenorhabditi \\
& function & molecular & observed & genes & protein \\
& proteins & development & high & caenorhabditi & amino \\
& required & studies & type & mutant & cdna \\
& show & model & effect & function & acid \\
\\
\end{tabular}
\begin{tabular}{c C{0.145\textwidth} C{0.145\textwidth} C{0.145\textwidth} C{0.145\textwidth} C{0.145\textwidth}}
\cline{2-6}
$k$ & Topic 68 & Topic 69 & Topic 70 & Topic 71 & Topic 72
\\
$n_{k,\.}$ & 1 921 & 1 894 & 1 836 & 1 828 & 1 776
\\
\cline{2-6}
& loci & worm & cell & alpha & unc \\
& genetic & elegans & epithelial & gpa & gaba \\
& strains & research & junctions & egl & receptors \\
\smash{\rotatebox[origin=c]{90}{\parbox{20ex}{\centering CGCBIB\\direct assignment\\$\varpi = 75\%$}}} & lines & caenorhabditi & membrane & signaling & receptor \\
& life & brenner & cells & protein & resistance \\
& mutations & years & dlg & goa & lev \\
& mutation & nematode & hmp & rgs & levamisole \\
& inbred & biology & exc & proteins & cholinergic \\
\\
\end{tabular}
\begin{tabular}{c C{0.145\textwidth} C{0.145\textwidth} C{0.145\textwidth} C{0.145\textwidth} C{0.145\textwidth}}
\cline{2-6}
$k$ & Topic 137 & Topic 138 & Topic 139 & Topic 140 & Topic 141
\\
$n_{k,\.}$ & 782 & 779 & 779 & 763 & 756
\\
\cline{2-6}
& cell & hsp & survival & acid & yeast \\
& dimensional & heat & mortality & amino & cerevisiae \\
& microscopy & shock & model & acids & saccharomyces \\
\smash{\rotatebox[origin=c]{90}{\parbox{20ex}{\centering CGCBIB\\direct assignment\\$\varpi = 50\%$}}} & embryo & chaperone & data & nematode & pombe \\
& analysis & small & gompertz & glycine & cell \\
& system & proteins & parameter & briggsae & budding \\
& computer & crystallin & population & cytochrome & schizosacchar \\
& time & hsps & rate & multiple & cycle \\
\\
\end{tabular}
\begin{tabular}{c C{0.145\textwidth} C{0.145\textwidth} C{0.145\textwidth} C{0.145\textwidth} C{0.145\textwidth}}
\cline{2-6}
$k$ & Topic 206 & Topic 207 & Topic 208 & Topic 209 & Topic 210
\\
$n_{k,\.}$ & 364 & 359 & 358 & 343 & 343
\\
\cline{2-6}
& activity & pgp & telomere & gcy & mediator \\
& lh & mrp & telomeres & guanylyl & med \\
& activities & aat & ceh & cyclase & sop \\
\smash{\rotatebox[origin=c]{90}{\parbox{20ex}{\centering CGCBIB\\direct assignment\\$\varpi = 25\%$}}} & juvenile & cells & yeast & wee & transcription \\
& nematodes & glycoprotein & nematode & ase & development \\
& antiallatal & mammalian & mrt & receptor & pvl \\
& hormone & resistance & telomerase & cyclases & transcription \\
& insect & glycoproteins & telomeric & gfp & dhp \\
\\
\end{tabular}
\begin{tabular}{c C{0.145\textwidth} C{0.145\textwidth} C{0.145\textwidth} C{0.145\textwidth} C{0.145\textwidth}}
\cline{2-6}
$k$ & Topic 261 & Topic 262 & Topic 263 & Topic 264 & Topic 265
\\
$n_{k,\.}$ & 164 & 164 & 159 & 157 & 156
\\
\cline{2-6}
& cog & atp & calcineurin & selection & srl \\
& wd & structures & egg & flow & rol \\
& repeat & oligomerizati & bovine & separation & threshold \\
\smash{\rotatebox[origin=c]{90}{\parbox{20ex}{\centering CGCBIB\\direct assignment\\$\varpi = 5\%$}}} & connection & family & laying & redundancy & ra \\
& native & binding & hg & flows & energy \\
& response & members & white & directional & free \\
& worm & stability & haemin & solution & external \\
& nr & mechanism & phosphatase & period & experimental \\
\\
\end{tabular}

\end{center}

\newpage
\section{Appendix: quantile summary of topics for \textsc{NeurIPS}}
\label{apdx:neurips}

Here we display a multi-quantile summary for NeurIPS, obtained by ranking all topics with at least 100 tokens by their total number of tokens, computing the $\varpi=100\%$, $75\%$, $50\%$, $25\%$, and $5\%$ quantiles.
We compute the five topics closest to each quantile by number of tokens, and display their top-eight words.

\begin{center}
\captionfont
\renewcommand{\arraystretch}{0.95}
\begin{tabular}{c C{0.145\textwidth} C{0.145\textwidth} C{0.145\textwidth} C{0.145\textwidth} C{0.145\textwidth}}
\cline{2-6}
$k$ & Topic 1 & Topic 2 & Topic 3 & Topic 4 & Topic 5
\\
$n_{k,\.}$ & 182 743 & 162 355 & 129 745 & 52 356 & 44 155
\\
\cline{2-6}
& system & function & number & model & training \\
& information & case & result & neural & set \\
& approach & result & small & result & data \\
\smash{\rotatebox[origin=c]{90}{\parbox{20ex}{\centering NeurIPS\\partially collapsed\\$\varpi = 100\%$}}} & set & term & values & system & test \\
& problem & parameter & order & activity & performance \\
& research & neural & large & input & number \\
& computer & form & effect & pattern & result \\
& single & defined & high & function & error \\
\\
\end{tabular}
\begin{tabular}{c C{0.145\textwidth} C{0.145\textwidth} C{0.145\textwidth} C{0.145\textwidth} C{0.145\textwidth}}
\cline{2-6}
$k$ & Topic 148 & Topic 149 & Topic 150 & Topic 151 & Topic 152
\\
$n_{k,\.}$ & 2 585 & 2 585 & 2 574 & 2 559 & 2 549
\\
\cline{2-6}
& genetic & delay & bengio & fig & matching \\
& algorithm & bifurcation & output & properties & model \\
& population & oscillation & dependencies & proc & point \\
\smash{\rotatebox[origin=c]{90}{\parbox{20ex}{\centering NeurIPS\\partially collapsed\\$\varpi = 75\%$}}} & fitness & point & input & step & correspondenc \\
& string & stability & experiment & range & match \\
& generation & fixed & frasconi & structure & problem \\
& bit & limit & term & calculation & set \\
& function & hopf & information & illinois & object \\
\\
\end{tabular}
\begin{tabular}{c C{0.145\textwidth} C{0.145\textwidth} C{0.145\textwidth} C{0.145\textwidth} C{0.145\textwidth}}
\cline{2-6}
$k$ & Topic 297 & Topic 298 & Topic 299 & Topic 300 & Topic 301
\\
$n_{k,\.}$ & 1 310 & 1 309 & 1 309 & 1 297 & 1 295
\\
\cline{2-6}
& vor & routing & speaker & delay & memory \\
& storage & load & recognition & input & action \\
& anastasio & network & normalization & transition & states \\
\smash{\rotatebox[origin=c]{90}{\parbox{20ex}{\centering NeurIPS\\partially collapsed\\$\varpi = 50\%$}}} & responses & path & male & window & agent \\
& velocity & packet & feature & width & sensing \\
& pan & traffic & female & connection & loop \\
& rotation & shortest & mntn & information & history \\
& vestibular & policy & ntn & temporal & mdp \\
\\
\end{tabular}
\begin{tabular}{c C{0.145\textwidth} C{0.145\textwidth} C{0.145\textwidth} C{0.145\textwidth} C{0.145\textwidth}}
\cline{2-6}
$k$ & Topic 446 & Topic 447 & Topic 448 & Topic 449 & Topic 450
\\
$n_{k,\.}$ & 748 & 748 & 746 & 739 & 735
\\
\cline{2-6}
& composite & psom & limited & tau & cmm \\
& mdp & robot & interconnect & hypothesis & speed \\
& action & camera & fan & mansour & particle \\
\smash{\rotatebox[origin=c]{90}{\parbox{20ex}{\centering NeurIPS\\partially collapsed\\$\varpi = 25\%$}}} & elemental & set & shunting & growth & particles \\
& optimal & pointing & modularity & coefficient & pattern \\
& payoff & coordinates & collective & function & presence \\
& solution & basis & linear & stem & method \\
& mdt & ritter & unit & large & card \\
\\
\end{tabular}
\begin{tabular}{c C{0.145\textwidth} C{0.145\textwidth} C{0.145\textwidth} C{0.145\textwidth} C{0.145\textwidth}}
\cline{2-6}
$k$ & Topic 566 & Topic 567 & Topic 568 & Topic 569 & Topic 570
\\
$n_{k,\.}$ & 396 & 385 & 383 & 379 & 372
\\
\cline{2-6}
& morph & minimal & visualization & periodic & machine \\
& kernel & root & high & period & capacity \\
& parent & biases & low & coefficient & path \\
\smash{\rotatebox[origin=c]{90}{\parbox{20ex}{\centering NeurIPS\\partially collapsed\\$\varpi = 5\%$}}} & human & attribute & diagram & primitive & trouble \\
& busey & remove & visualizing & homogeneous & high \\
& similar & rumelhart & graphic & tst & task \\
& exemplar & row & fund & mhaskar & increasing \\
& distinctivene & exponential & window & chain & measures \\
\\
\end{tabular}

\end{center}

\begin{center}
\captionfont
\renewcommand{\arraystretch}{0.95}
\begin{tabular}{c C{0.145\textwidth} C{0.145\textwidth} C{0.145\textwidth} C{0.145\textwidth} C{0.145\textwidth}}
\cline{2-6}
$k$ & Topic 6 & Topic 2 & Topic 1 & Topic 13 & Topic 62
\\
$n_{k,\.}$ & 473 770 & 93 435 & 52 418 & 50 965 & 41 565
\\
\cline{2-6}
& network & network & model & model & function \\
& model & unit & neuron & data & network \\
& learning & input & input & parameter & bound \\
\smash{\rotatebox[origin=c]{90}{\parbox{25ex}{\centering NeurIPS\\subcluster split-merge\,\,\,\,\,\,\,\\$\varpi = 100\%$}}} & function & learning & network & network & dimension \\
& input & training & cell & algorithm & learning \\
& neural & weight & system & mixture & result \\
& algorithm & neural & unit & function & number \\
& set & output & visual & gaussian & set \\
\\
\end{tabular}
\begin{tabular}{c C{0.145\textwidth} C{0.145\textwidth} C{0.145\textwidth} C{0.145\textwidth} C{0.145\textwidth}}
\cline{2-6}
$k$ & Topic 440 & Topic 170 & Topic 334 & Topic 418 & Topic 312
\\
$n_{k,\.}$ & 2 678 & 2 657 & 2 643 & 2 636 & 2 622
\\
\cline{2-6}
& learning & movement & motion & learning & cell \\
& critic & visual & unit & algorithm & correlation \\
& function & vector & direction & action & neuron \\
\smash{\rotatebox[origin=c]{90}{\parbox{25ex}{\centering NeurIPS\\subcluster split-merge\,\,\,\,\,\,\,\\$\varpi = 75\%$}}} & actor & image & model & advantage & model \\
& algorithm & model & stage & system & unit \\
& system & location & input & function & interaction \\
& control & eye & network & policy & firing \\
& model & map & cell & control & set \\
\\
\end{tabular}
\begin{tabular}{c C{0.145\textwidth} C{0.145\textwidth} C{0.145\textwidth} C{0.145\textwidth} C{0.145\textwidth}}
\cline{2-6}
$k$ & Topic 378 & Topic 322 & Topic 82 & Topic 344 & Topic 414
\\
$n_{k,\.}$ & 1 032 & 1 028 & 1 013 & 1 009 & 1 006
\\
\cline{2-6}
& iiii & cell & model & form & component \\
& cell & spike & response & word & algorithm \\
& network & unit & neural & phone & sources \\
\smash{\rotatebox[origin=c]{90}{\parbox{25ex}{\centering NeurIPS\\subcluster split-merge\,\,\,\,\,\,\,\\$\varpi = 50\%$}}} & neural & function & escape & input & analysis \\
& response & firing & interneuron & network & data \\
& model & result & cockroach & system & noise \\
& point & transfer & leg & training & orientation \\
& fixed & sorting & input & meaning & spatial \\
\\
\end{tabular}
\begin{tabular}{c C{0.145\textwidth} C{0.145\textwidth} C{0.145\textwidth} C{0.145\textwidth} C{0.145\textwidth}}
\cline{2-6}
$k$ & Topic 220 & Topic 341 & Topic 441 & Topic 447 & Topic 308
\\
$n_{k,\.}$ & 728 & 723 & 723 & 722 & 721
\\
\cline{2-6}
& aspect & element & network & input & traffic \\
& object & pairing & neural & unit & waiting \\
& view & grouping & constraint & spike & elevator \\
\smash{\rotatebox[origin=c]{90}{\parbox{25ex}{\centering NeurIPS\\subcluster split-merge\,\,\,\,\,\,\,\\$\varpi = 25\%$}}} & node & group & match & layer & appeared \\
& learning & saliency & learn & learning & application \\
& network & contour & problem & model & compared \\
& weight & computation & initial & predict & department \\
& equation & optimal & row & prediction & found \\
\\
\end{tabular}
\begin{tabular}{c C{0.145\textwidth} C{0.145\textwidth} C{0.145\textwidth} C{0.145\textwidth} C{0.145\textwidth}}
\cline{2-6}
$k$ & Topic 259 & Topic 246 & Topic 195 & Topic 245 & Topic 293
\\
$n_{k,\.}$ & 509 & 507 & 506 & 503 & 503
\\
\cline{2-6}
& input & network & network & network & network \\
& output & neural & symbol & equation & function \\
& activation & task & vtp & neuron & adaptation \\
\smash{\rotatebox[origin=c]{90}{\parbox{25ex}{\centering NeurIPS\\subcluster split-merge\,\,\,\,\,\,\,\\$\varpi = 5\%$}}} & data & link & learning & moment & algorithm \\
& encoded & food & phrases & neural & prediction \\
& function & nodes & sentences & approximation & projection \\
& hidden & output & vpp & ohira & neural \\
& model & recurrent & classificatio & stochastic & training \\
\\
\end{tabular}

\end{center}

\newpage
\section{Appendix: topics produced by Algorithm \ref{alg:ppu-hdp} on \textsc{PubMed}}
\label{apdx:pubmed}

Here we show top eight words for each topic together with total number of tokens assigned, which is shown at the top of each table.
We display all topics containing at least eight unique word tokens.

\begin{center}
\captionfont
\renewcommand{\arraystretch}{0.95}
\begin{tabular}{c C{0.145\textwidth} C{0.145\textwidth} C{0.145\textwidth} C{0.145\textwidth} C{0.145\textwidth}}
\cline{2-6}
$k$ & Topic 1 & Topic 2 & Topic 3 & Topic 4 & Topic 5
\\
$n_{k,\.}$ & 47 322 709 & 40 229 486 & 34 685 122 & 30 795 166 & 30 707 144
\\
\cline{2-6}
& care & age & model & cell & gene \\
& health & risk & data & expression & protein \\
& patient & children & system & growth & dna \\
\smash{\rotatebox[origin=c]{90}{\parbox{20ex}{\centering PubMed}}} & medical & year & time & protein & expression \\
& research & women & analysis & factor & sequence \\
& clinical & patient & effect & receptor & genes \\
& system & factor & test & kinase & rna \\
& cost & population & field & beta & region \\
\\
\end{tabular}
\begin{tabular}{c C{0.145\textwidth} C{0.145\textwidth} C{0.145\textwidth} C{0.145\textwidth} C{0.145\textwidth}}
\cline{2-6}
$k$ & Topic 6 & Topic 7 & Topic 8 & Topic 9 & Topic 10
\\
$n_{k,\.}$ & 28 510 997 & 27 277 306 & 26 709 116 & 26 408 263 & 25 200 662
\\
\cline{2-6}
& cell & cancer & patient & rat & cell \\
& il & tumor & treatment & receptor & electron \\
& cd & patient & mg & effect & muscle \\
\smash{\rotatebox[origin=c]{90}{\parbox{20ex}{\centering PubMed}}} & mice & carcinoma & drug & neuron & tissue \\
& antigen & cell & effect & brain & fiber \\
& human & breast & therapy & activity & rat \\
& lymphocytes & survival & dose & stimulation & development \\
& immune & tumour & day & induced & microscopy \\
\\
\end{tabular}
\begin{tabular}{c C{0.145\textwidth} C{0.145\textwidth} C{0.145\textwidth} C{0.145\textwidth} C{0.145\textwidth}}
\cline{2-6}
$k$ & Topic 11 & Topic 12 & Topic 13 & Topic 14 & Topic 15
\\
$n_{k,\.}$ & 24 856 624 & 24 750 437 & 24 607 618 & 24 482 090 & 22 956 810
\\
\cline{2-6}
& patient & patient & blood & patient & infection \\
& surgery & artery & pressure & disease & virus \\
& complication & heart & flow & clinical & hiv \\
\smash{\rotatebox[origin=c]{90}{\parbox{20ex}{\centering PubMed}}} & surgical & coronary & min & diagnosis & strain \\
& treatment & ventricular & effect & lesion & infected \\
& year & myocardial & exercise & brain & patient \\
& postoperative & cardiac & arterial & syndrome & positive \\
& operation & left & heart & imaging & viral \\
\\
\end{tabular}
\begin{tabular}{c C{0.145\textwidth} C{0.145\textwidth} C{0.145\textwidth} C{0.145\textwidth} C{0.145\textwidth}}
\cline{2-6}
$k$ & Topic 16 & Topic 17 & Topic 18 & Topic 19 & Topic 20
\\
$n_{k,\.}$ & 22 095 623 & 21 838 239 & 21 363 408 & 20 887 061 & 20 828 980
\\
\cline{2-6}
& ca & structure & concentration & pregnancy & protein \\
& effect & binding & degrees & level & binding \\
& receptor & protein & samples & women & human \\
\smash{\rotatebox[origin=c]{90}{\parbox{20ex}{\centering PubMed}}} & channel & reaction & liquid & hormone & antibodies \\
& cell & acid & solution & day & acid \\
& calcium & interaction & assay & fetal & alpha \\
& concentration & compound & detection & infant & antibody \\
& na & site & system & concentration & gel \\
\\
\end{tabular}
\begin{tabular}{c C{0.145\textwidth} C{0.145\textwidth} C{0.145\textwidth} C{0.145\textwidth} C{0.145\textwidth}}
\cline{2-6}
$k$ & Topic 21 & Topic 22 & Topic 23 & Topic 24 & Topic 25
\\
$n_{k,\.}$ & 20 106 260 & 19 788 488 & 18 675 096 & 17 163 327 & 16 440 018
\\
\cline{2-6}
& rat & bone & patient & gene & activity \\
& cell & patient & renal & mutation & acid \\
& effect & joint & liver & genetic & enzyme \\
\smash{\rotatebox[origin=c]{90}{\parbox{20ex}{\centering PubMed}}} & liver & muscle & transplantati & chromosome & liver \\
& mice & fractures & blood & analysis & concentration \\
& dose & hip & disease & genes & rat \\
& drug & year & acute & dna & enzymes \\
& mg & implant & chronic & polymorphism & synthesis \\
\\
\end{tabular}
\begin{tabular}{c C{0.145\textwidth} C{0.145\textwidth} C{0.145\textwidth} C{0.145\textwidth} C{0.145\textwidth}}
\cline{2-6}
$k$ & Topic 26 & Topic 27 & Topic 28 & Topic 29 & Topic 30
\\
$n_{k,\.}$ & 16 136 164 & 14 201 063 & 13 706 016 & 13 191 158 & 13 105 245
\\
\cline{2-6}
& effect & diet & patient & strain & protein \\
& platelet & weight & disease & plant & membrane \\
& induced & intake & gastric & growth & cell \\
\smash{\rotatebox[origin=c]{90}{\parbox{20ex}{\centering PubMed}}} & oxide & food & asthma & acid & domain \\
& rat & body & test & bacteria & binding \\
& cell & effect & pylori & activity & receptor \\
& endothelial & acid & arthritis & cell & lipid \\
& activity & vitamin & chronic & species & membranes \\
\\
\end{tabular}
\begin{tabular}{c C{0.145\textwidth} C{0.145\textwidth} C{0.145\textwidth} C{0.145\textwidth} C{0.145\textwidth}}
\cline{2-6}
$k$ & Topic 31 & Topic 32 & Topic 33 & Topic 34 & Topic 35
\\
$n_{k,\.}$ & 12 705 261 & 12 624 252 & 10 422 885 & 9 850 167 & 7 027 660
\\
\cline{2-6}
& insulin & species & exposure & skin & level \\
& glucose & population & concentration & patient & patient \\
& diabetes & infection & iron & eyes & ml \\
\smash{\rotatebox[origin=c]{90}{\parbox{20ex}{\centering PubMed}}} & cholesterol & animal & level & eye & control \\
& level & egg & water & retinal & serum \\
& diabetic & host & effect & laser & plasma \\
& plasma & parasite & exposed & visual & factor \\
& lipoprotein & malaria & lead & corneal & concentration \\
\\
\end{tabular}
\begin{tabular}{c C{0.145\textwidth} C{0.145\textwidth} C{0.145\textwidth} C{0.145\textwidth} C{0.145\textwidth}}
\cline{2-6}
$k$ & Topic 36 & Topic 37 & Topic 38 & Topic 39 & Topic 40
\\
$n_{k,\.}$ & 6 130 945 & 644 182 & 2 264 & 1 325 & 104
\\
\cline{2-6}
& dental & sleep & ppr & pac & feather \\
& oral & caffeine & csc & foal & tieg \\
& teeth & tea & stretch & cpr & sorghum \\
\smash{\rotatebox[origin=c]{90}{\parbox{20ex}{\centering PubMed}}} & tooth & effect & pthrp & pacap & coii \\
& periodontal & theophylline & response & edm & phycocyanin \\
& treatment & night & br & speck & vanx \\
& salivary & coffee & gei & branchial & midrib \\
& gland & green & pth & lth & ifi \\
\\
\end{tabular}
\begin{tabular}{c C{0.145\textwidth} C{0.145\textwidth} C{0.145\textwidth} C{0.145\textwidth} C{0.145\textwidth}}
\cline{2-2}
$k$ & Topic 41 &  &  &  & 
\\
$n_{k,\.}$ & 104 &  &  &  & 
\\
\cline{2-2}
& steer \\
& mca \\
& persistency \\
\smash{\rotatebox[origin=c]{90}{\parbox{20ex}{\centering PubMed}}} & buckwheat \\
& dnak \\
& eset \\
& branding \\
& akr \\
\\
\end{tabular}

\end{center}

\end{document}